\documentclass[a4paper,12pt]{report}

\usepackage{bold-extra}
\usepackage{float}
\usepackage{graphicx,fancyhdr,amsmath,amssymb}
\usepackage[USenglish]{babel}
\usepackage{listings}
\usepackage{longtable}
\usepackage[usenames,dvipsnames]{xcolor}
\usepackage{pdfpages}
\usepackage{listings}
\lstset{
  escapeinside={(*}{*)},
  basicstyle=\ttfamily,
  columns=fullflexible,
}

\usepackage{verbatim}
\usepackage[nottoc]{tocbibind}
\usepackage{caption}
\usepackage{subcaption}

\usepackage{amsthm}
\newtheoremstyle{parenbold} 
  {
  1em
  }           
  {}                    
  {
  \itshape
  }  
  {}                    
  {
  \bfseries
  }           
  {}                    
  {1ex}                 
  {\thmname{#1 }\thmnumber{#2}\thmnote{ [#3]}} 
\theoremstyle{definition}
\newtheorem{definition}{Definition}[section]
\theoremstyle{parenbold}
\newtheorem{proposition}{Proposition}[section]
\newtheorem{hypothesis}{Null Hypothesis}[section]

\usepackage[
    colorlinks,
    linkcolor=blue,
    citecolor=brown,
    backref=page,
    urlcolor=blue,
    linktoc=all,
    pdfstartview=FitH
]{hyperref} 
\hypersetup{final}

\usepackage{url}  
\usepackage[normalem]{ulem} 





\usepackage[square,sort]{natbib} 

\newcounter{expcount}

\renewcommand{\to}{\rightarrow}

\newcommand{\pc}[1]{\texttt{#1}}

\renewcommand{\phi}{\varphi}

\renewcommand{\implies}{\Rightarrow}

\newcommand{\metaai}{Metagol\textsubscript{\emph{AI}}}
\newcommand{\metast}{Metagol\textsubscript{\emph{PT}}}
\newcommand{\metart}{Metagol\textsubscript{\emph{RT}}}
\newcommand{\metagol}{Metagol}
\newcommand{\predsig}{\mathcal{P}}
\newcommand{\consts}{\mathcal{C}}
\newcommand{\fovars}{\mathcal{V}_1}
\newcommand{\hovars}{\mathcal{V}_2}
\newcommand{\kb}{\mathcal{B}}
\newcommand{\defc}{\mathcal{D}}
\newcommand{\meta}{\mathcal{M}}
\newcommand{\exs}{\mathcal{E}}
\newcommand{\exspos}{\mathcal{E}^+}
\newcommand{\exsneg}{\mathcal{E}^-}
\newcommand{\entails}{\vDash}

\newcommand{\<}{\langle}
\renewcommand{\>}{\rangle}
\newcommand{\ot}{\leftarrow}
\newcommand{\Htwo}{H_2^2}

\title{
	Refinement Type Directed Search for Meta-Interpretive-Learning of Higher-Order Logic Programs
}

\author{Rolf Morel}

\date{}

\begin{document}


\begin{titlepage}
  \centering

  \includegraphics[width=0.23\textwidth]{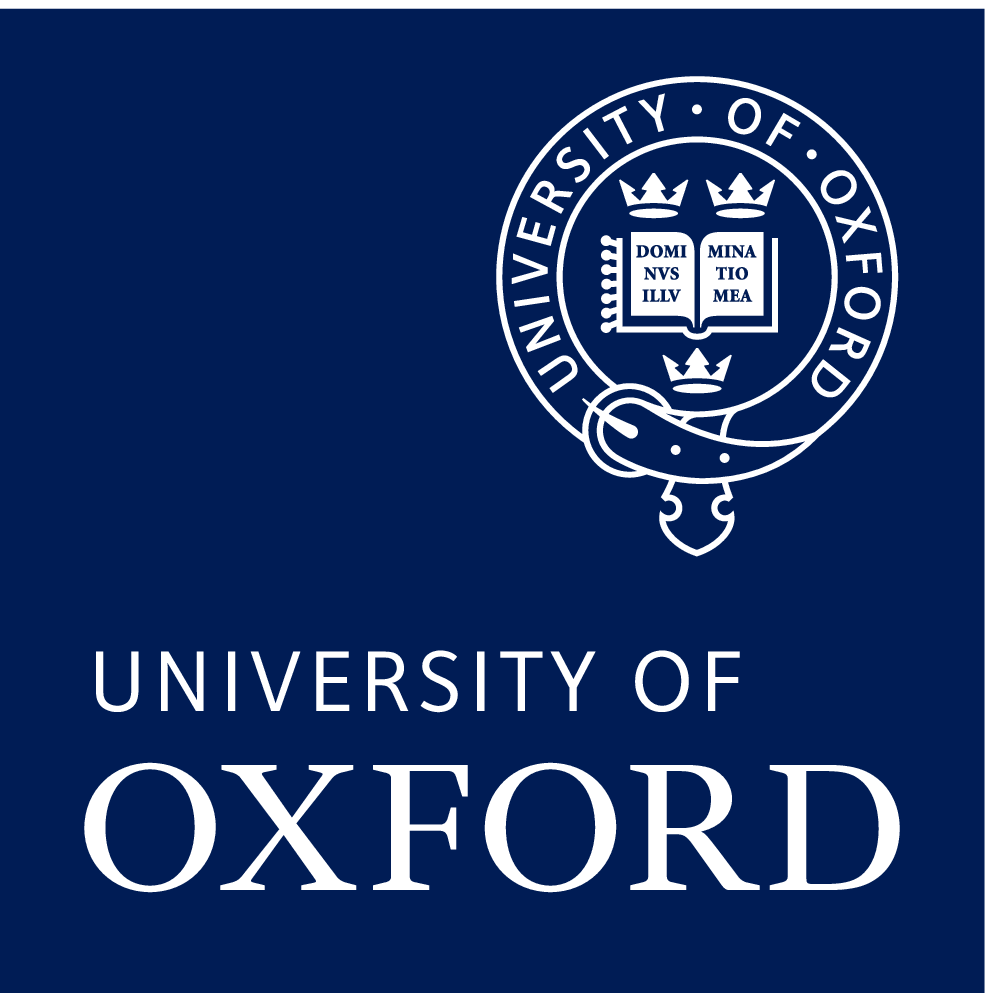}\par\vspace{1cm}
  \vspace{0.5cm}
  {\scshape\LARGE University of Oxford \par}
  \vspace{0.5cm}
  {\scshape\Large MSc in Computer Science 2017-18 \par}
  \vspace{0.5cm}
  {\LARGE\bfseries 
Refinement Type Directed Search for \\Meta-Interpretive Learning of \\Higher-Order Logic Programs
  \par}
  \vspace{3cm}
  
  {\Large\itshape Rolf Morel \par}
  \vspace{0.25cm}
  {\Large\itshape St. John's College \par}
  \vspace{0.25cm}
  {\Large\itshape supervised by Prof.~Luke Ong\\and Dr.~Andrew Cropper \par}
  \vspace{2cm}
  
  {\large A thesis submitted for the degree of \par}
  \vspace{0.25cm}
  {\large\itshape Master of Science Computer Science \par}

\end{titlepage}

\thispagestyle{empty}

\chapter*{Abstract}

The program synthesis problem within the Inductive Logic Programming (ILP)
community has typically been seen as untyped.
We consider the benefits of user provided types on background knowledge.
Building on the Meta-Interpretive Learning (MIL) framework, we show that type
checking is able to prune large parts of the hypothesis space of programs.
The introduction of polymorphic type checking to the MIL approach to logic 
program synthesis is validated by strong theoretical and experimental results,
showing a cubic reduction in the size of the search space and synthesis time,
in terms of the number of typed background predicates.
Additionally we are able to infer polymorphic types of 
synthesized clauses and of entire programs.
The other advancement is in developing an approach to leveraging
refinement types in ILP. Here we show that further pruning of the search
space can be achieved, though the SMT solving used for refinement type checking
comes at a significant cost timewise.


\tableofcontents

\chapter{Introduction}

In the last decade there has been a great surge in the effective application of 
Artificial Intelligence techniques to a great number of 
practical problems, e.g.~image classification \citep{deng2009imagenet},
code completion \citep{Raychev:2014:CCS:2666356.2594321},
as well as autonomous navigation \citep{bojarski2016end}.
Though these advances in AI technology have made previous instances of the above 
problems tractable, many of the techniques leveraged are not able to give a useful 
explanation of why decisions are being made (a prime example being artificial 
neural networks), and struggle to generalize over their entire input space
\citep{gulwani2015inductive}.
For some problems, like in the case of autonomous driving cars, the ability to
understand why certain decisions will be/were made could be of utmost importance 
with regard to safety and liability issues.

\paragraph{Comprehensibility}
The problem with the alluded to methods is that the generated programs 
do not necessarily have a description that allows humans to understand and
effectively reason about the programs (e.g.~the edge weights in neural networks 
do not give a ready interpretation of program behaviour). The focus
of these approaches is on improving according to \cite{michie}'s \emph{weak}
criterion for machine learning, i.e.~improving predictive accuracy.
In contrast, programs described by programming languages do have
semantics that are comprehensible to both machines and humans, 
which corresponds to Michie's criterion for \emph{strong} machine learning.
From this it follows that a fruitful approach to explainable AI might be to
generate computer programs expressed in programming languages, 
i.e.~\emph{code programs}. 
Programming languages also strongly emphasize
the composability of programs, motivating the usage of the term 
\emph{synthesis} for the generation of code programs.
Logic and functional languages in particular are good candidates for synthesis 
due to their high level of abstraction being associated with smaller programs. 
Recent work on \emph{ultra-strong} machine learning \citep{Muggleton2018}
has shown that learning logic relational programs can extend humans' capacity 
for understanding training data to a level beyond unassisted study.

\paragraph{Inductive synthesis}
Code program synthesis is not only useful in terms of explainable AI, it is 
also of interest for the sake of automating programming.
Code synthesis holds promise as a tool for developers to become
more efficient, but has also proven effective in empowering non-programmers in 
obtaining programs that fit their needs, e.g.~learning spreadsheet transformations \citep{Singh2012a}. For these
users of program synthesis the synthesis problem is posed as finding a program
that accounts for the examples that the user gives (in terms of inputs and
outputs) and which is mostly likely to appropriately generalize from the
examples \citep{Gulwani2010}.
The paradigm that combines both automatic generation of code programs and
learning from examples is known as \emph{Inductive Programming} \citep{kitzelmann}.
Versus general machine learning, 
inductive programming typically relies on very few examples to learn from,
e.g.~the single example $f([[1,2,3],[4,5]]) = [[1,2],[4]]$ is enough to induce
a program that drops the last element from input lists.
For programmers this paradigm is useful for problems ranging from 
which constants to fill in for edge cases (see \citep{Solar-Lezama2009}), up to
the potential to discover novel efficient algorithms \citep{Cropper2018}.

\paragraph{Logic programs}
The above considerations motivate the study of Inductive Logic Programming (ILP),
"a form of machine learning that uses logic programming to represent examples, background knowledge, and learned programs" \citep{Cropper2017}, 
an active field of study for almost thirty years \citep{Muggleton1991}.
The specification of an inductive logic problem is in terms of 
the examples the program needs to satisfy, along with background information 
in the form of asserted facts and available program fragments,
and possibly rules about the structure of the program to be synthesized.
A particularly powerful approach developed recently is that of the 
\emph{Meta-Interpretive Learning} (MIL) \citep{Muggleton2014}.
One of the major benefits of this approach (versus virtually all other existing 
synthesis systems) is that it is able do \emph{predicate invention}, that is,
it is able to construct helper predicates/functions.

The MIL framework essentially takes the user's problem
description and sees it as a specification of a search space of possible 
programs to consider.
The time required for synthesis is most influenced by how large the search space
is that is traversed.
The implementation of the MIL framework is written in Prolog,
a logic programming language that does not consider the types 
of its terms. The MIL algorithm in turn does not make use of the 
types of predicates during its search.

\paragraph{Contribution}

The thesis of this document is that types are a significant feature for the
MIL approach to code program synthesis.  
Types provide an effective way of pruning out nonsensical programs in the search
space, i.e.~programs already rejected by type checking.
Different gradations of types
are considered, namely polymorphic types, i.e.~types with type parameteres,
and refinement types (with polymorphism), i.e.~types with a proposition restricting
inhabitants.
The second system has strictly more accurate types than the first, 
but the type checking comes at a considerable cost.
For polymorphic types we show significant benefits both theoretically and 
experimentally, 
showing a cubic reduction in the size of the search space and synthesis time,
in terms of the number of typed background predicates.
Additionally we are able to infer polymorphic types of 
synthesized clauses and of entire programs.
For refinement typing the foremost contribution is in the 
introduction of refinement types to ILP, while the experimental results 
indicate more work is needed for refinement type checking to be truly effective.

\paragraph{Document organization}
This dissertation consists of seven chapters. The first chapter presents a brief
introduction to the project. The other chapters are described as follows:
chapter 2 contains a literature review of code synthesis approaches
with a focus on types and ILP. Chapter 3 discusses
in detail the existing MIL systems.
Chapters 4 through 7 represent the novel contributions of this project.
Chapter 4 discusses where MIL can benefit from the introduction of types.
The subsequent chapter discusses how
polymorpic type checking is introduced to the existing system. 
Chapter 6 deals with refinement types and with how Satisfiability Modulo
Theories (SMT) solving is leveraged for type checking. 
Both chapter 5 and 6 have sections presenting theoretical and experimental results.
The conclusion, chapter 7, considers the implications of the 
experimental results and relates this outcome to future work.

\chapter{Review of Program Synthesis}

\iftrue
Following on from the arguments made in the introduction regarding the 
significance of program synthesis,
this chapter opens with discussing the strong relation
between program synthesis and the notion of proof search.

Subsequently the presentation switches to a literature review.
Different aspects of the synthesis problem are highlighted, such as the
differences in problem definitions, various application areas and
multiple paradigms.
We do not aim for comprehensiveness, instead we review a selection of disparate
approaches in the literature based on contrasting features.

The focus is on code program synthesis approaches for synthesis of functional 
programs, usually based around typing information,
and synthesis of logic programs, where we mainly look at the
Inductive Logic Programming (ILP) approach. Methods outside of
these areas are also touched upon.


\section{Program Synthesis as Proof Search}

There is a strong correspondence between program synthesis and automated
theorem proving. In program synthesis there is a specification for
which we try to find a program satisfying the specification.
The user might provide helper functions to be used in the synthesized program.
In automated theorem proving we state a proposition and want to find a proof
for the validity of the formula. The user might provide lemmas that may
be used in the proof.

As we will see later, many specifications can be expressed as a type that
the synthesized program needs to satisfy. 
The connection between program synthesis and proof search is most 
eloquently expressed by the Curry-Howard correspondence \citep{sorensen2006lectures}.
This correspondence states that there is a direct mapping between 
programs (expressed in type theory) and mathematical proofs.
For every type theory, a formal system for describing (functional) programs 
with their types, there is a mapping between the types of the programs and logical formulae.
From this correspondence it follows that type checking is the same as
checking that a proof follows the rules of formal logical system 
(e.g.~natural deduction).

This connection means that when we are interested in program synthesis we 
should be aware of the work that exists ``on the other side'' of the 
correspondence, namely in (semi-)automated theorem proving.
On the side of theorem proving there is an interesting distinction made
between systems that fully automate the construction of proofs 
\citep{bibel2013automated} and 
of so called proof assistants (e.g.~\cite*{the_coq_development_team_2018_1219885},
and Isabelle \citep{Nipkow:2002:IPA:1791547}), where software assists users in
writing down proofs in a formal system. One form of assistance provided
is the ability to do proof search. Whilst the ideal of theorem proving is full
automation, for many non-trivial logical systems this is simply not tractable 
(in general, it is even undecidable to determine the truth of all propositions).
Proof assistants are instead able to do proof search based on a (partial) proof,
and lemmas already constructed by the user.

The idea of having the user
guide proof automation is recognizable program synthesis as
providing background knowledge, in the form of helper functions.
How a user of a proof assistant asks the system to find a proof, and upon
failure will try to provide additional guidance by providing additional
lemmas (or working further on the proof) is also in clear relation to how
users work with synthesis systems. 

For this project we work in the setting of logic programs. Logic programming
is at an interesting intersection of programming and theorem proving, as 
execution is performed based on SLD-resolution, the algorithm behind
first-order theorem provers. 
For this document we will hence often rely on the terminology of 
proofs and proof search.

\section{Dimensions of Program Synthesis}

In this section we review several important features of program synthesis.
Based on a survey of the literature (focused on type-based synthesis and approaches
to ILP), we look at the domains and applications areas where synthesis
has been successfully applied.
Next we look at the different ways that the synthesis problem can be specified
by the user.
Following on we look at the types of programs that can be learned by different
approaches. We finish on a discussion of the recognisable paradigms within
the literature.
We follow \citep{Gulwani2010} in identifying the
key aspects of an approach to the synthesis problem as
capturing the user's intent in a specification, the space of
programs that is considered, and the approach to searching this space.

\subsection{Synthesis Domains/Application Areas}

In this section we give an idea of the diversity of problems addressable by
program synthesis, based on a selection of successful approaches in the literature.
The review paper by \citep{Gulwani2010} is great resource for entering upon
the field of program synthesis, and unless otherwise annotated the reference
for the below discussion is this paper.

Bitvector algorithms ``typically describe some plausible yet unusual operation
on integers or bit strings that could easily be programmed using either a 
longish fixed sequence of machine instructions or a loop, but the same thing 
can be done much more cleverly using just four or three or two carefully chosen 
instructions whose interactions are not at all obvious until explained or fathomed''
\citep{Warren:2002:HD:515297}. In synthesis of bitvector algorithm initial
approaches used straight-forward brute-force
search \citep{Massalin:1987:SLS:36177.36194}, while newer approaches
are to leverage SMT solver reasoning \citep{Jha:2010:OCP:1806799.1806833}.

Other systems are able to fill in holes left in programs. The Sketch system
\citep{Solar-Lezama2009} deals with finding the appropriate values for
single value holes in imperative programs.
The holes are typically boundary conditions, e.g.~for loops.
In template-based synthesis \citep{Srivastava2013} the conditions on holes
are less-restricted in that holes can be filled by arbitrary expressions.
Templates are a way for the user to provide their insight
to the synthesis system, by writing code or invariants with holes.
Constraints are generated for these holes and SMT solvers are used to 
find solutions.

Sometimes we are able to give a very precise specification for algorithms.
For example, we may be able to express the condition for properly implementing
critical sections and shared variables for mutual exclusion algorithms. 
A synthesis system may then be able synthesis new concurrent algorithms by
properly inserting lock acquiring and releasing statements. \citep{bar2003automatic}
More generally we might be able to very precisely describe how function behave.
For functional programs a good place to assert such a specification is in
the type of the program \citep{frankle2016example}.
In \citep{polikarpova2016program} non-trivial algorithms over binary-search trees
are synthesized from specifications expressed as refinement types. The synthesis
technique derives from type checking rules for the programming language that
is considered.

The are a couple of domains that are only well suited for synthesis from
examples. We highlight string and matrix transformations.
A string transformation is a mapping on strings.
For example ``tony.hoare@cs.ox.ac.uk'' might be
mapped to ``Hoare, T., UK''. One of the applications of string transformation 
is to learn spreadsheet operations \citep{Singh2012}. 
String transformations are also an often considered problem in ILP,
e.g.~\citep{cropper2016data}.
Similarly one can try to learn matrix transformations, by giving input-output
examples. These examples are typically larger than string transformations,
making it a good benchmark test for synthesis systems \citep{Wang2017}.

Another avenue explored is inventing strategies for robots performing tasks.
In \citep{Cropper2016} a very high-level strategy is learned for 
serving either tea or coffee for a table of cups. The system is provided with
examples of good behaviour which are used to generate a higher-order logic 
program describing the actions that the robot should perform.


\subsection{Specifications}

From the user's perspective one of the most important features of a synthesis
system is which problems it is able to solve. In this subsection we present
different ways of specifying the synthesis problem. We distinguish two
main aspects to specification, namely how to specify the \emph{goal}
and how to provide \emph{background knowledge}.

\paragraph{Goals}
The \emph{goal} of a synthesis problem is a program that satisfies the 
specification.
The are multiple ways to describe what is expected of the goal.

There is the programming-by-example specification, where the conditions
on the goal are stated as input-output examples \citep{metagol}. Examples
are split in positive examples, ones that the synthesized program needs
to satisfy, and negative examples (also called counter-examples)
which should not be entailed by the program.
A related specification of the goal is programming-by-demonstration \citep{Lau2000}:
in addition to providing input-output examples a (partial) trace is provided
of the transformations that turned the input into the output.

As discussed in the previous section, there is the Sketch/templates
approach to specification by writing programs with holes.
In essence the program goal is already partially given by the user and only
small fragments need to be filled in.

A very general approach is to say that any (set of) logical proposition(s)
might be used as a specification. The previous section's mutual exclusion
conditions would fall in this category. A more rigid framework follows
from only allowing specifications over a function's arguments.
By the Curry-Howard correspondence these propositions may also be stated
as function typing \citep{frankle2016example}. A recent development is that even 
examples may be encoded in types \citep{osera2015type}, yielding types as a very
powerful specification vehicle.

\paragraph{Background knowledge}

There are systems that approach synthesis with only a specification. Such 
systems are forced to perform searches over very larges program spaces,
the exhaustive search of small lambda-terms satisfying a type, by
\citep{Katayama2005}, being such an example. 
More common is to accept guidance from the user, in which case we call the
provided hints the \emph{background knowledge}. 

In inductive logic programming the user may provide background knowledge
in the form of facts, defined Horn clauses (which may be higher-order)
\citep{Raedt2010}.
In meta-interpretive learning \citep{muggleton2014meta} in addition metarules
are supplied. Metarules define structure for clauses which are \emph{invented}
by the synthesis system. In chapter 3 we will at these metarules in more
detail.

In the setting where we are trying to synthesis functional programs from
specification embedded in types background knowledge takes on the following
forms.
Type declarations informing the system of the data structures over which to
operate. Type declarations for helper functions (with or without the actual
definition of the functions). For encoding more precise properties refinement
types are leveraged. 

As a final example of background knowledge we highlight that the Sketch and
template approaches to synthesis blur the line between providing goal specifications
and background knowledge. Partial programs are more typically background
knowledge, but are here used as the primary means of specification.

\subsection{Paradigms and Types of Programs Learned}

In this subsection we look some of the paradigms for program synthesis. For
the separate approaches we highlight the types of programs that have been 
synthesized by the method.
The most straightforward approach to synthesis to enumerative the entire space
of programs that your method is willing to consider, e.g.~\citep{Katayama2005}
searches over all lambda-terms. We look at several more sophisticated methods.
We will look at Inductive Logic Programming here as it will feature prominently
in the remainder of the document.

\paragraph{Maintaining consistent programs}

One idea is to maintain the space of programs that are consistent with examples.
With no examples the entire space of programs works. By interatively adding
examples one can start reducing this space of consistent programs.
Version Space Algebra \citep{Lau2000} is a powerful approach whereby the
space of programs (version space) is maintained by a partial ordering of
programs (usually by generality), which can be fully represented by the maxima and
minima of the ordering. An update function is able to shrink these boundaries
for each additional example considered.

Other approaches that are able to use a succinct representation of the space
of solutions is the work by \cite{Gvero2013} where types are used to 
represent classes of expressions that are candidates for code completion
queries in IDEs. In \citep{Singh2012}, an exponentially sized space of
consistent string transformation is maintained in polynomial space by a
clever sharing of shared sub-expressions.


\paragraph{Constraint solving}

A very general approach to synthesis is to convert the problem to (logical) 
constraints. These constraints are written in a language of an off-the-shelf
solver (in particular SMT solvers). The results of the solver
are then used to construct program solutions.

In the Sketch/template approach to synthesis the holes in programs
are surrounded by code that imposes conditions on the possibilities for such holes
\citep{Solar-Lezama2009} \citep{Srivastava2013}.
In Constraint-Based Synthesis of Datalog Programs \citep{B2017} the derivation
algorithm for (first-order) logic programming is encoded as logical constraints,
with additionally that the predicates symbols are allowed to vary, again
subject to constraints. To make problem tractable the constraints only
encode derivations of a bounded length. Any solution found satisfying the 
constraints for partial derivations is then checked for being an actual solution
by using the found program to build a derivation in Datalog itself.
If the program does not work an additional constraint is generated excluding
the program from being considered again.

\paragraph{Inductive logic programming}

We have that in ILP the norm is to learn untyped logic programs.
Logic programs consist of Horn clauses, implications with a single atom in
the consequent \citep{Raedt2010}. 
These programs are commonly interpreted as either Prolog, or
Datalog code. Prolog represents the SLD-resolution approach to program execution,
and comes with features that make the language Turing-complete, whilst Datalog
uses grounding of first-order logic to propositional logic to determine
entailment (which is decidable). While ILP systems primarily are used to
synthesize first-order programs, with an important feature being recursive programs,
recent work makes invention of higher-order programs possible
\citep{Muggleton2015}. 
This document will further discuss the Meta-Interpretive Learning (MIL)
framework as a particularly strong approach to ILP.
A simplistic approach to typing in MIL was already considered in 
\citep{Farquhar2015} in order to learn proof strategies. The system only
support simple non-polymorphic types and hardcodes types in metarules.


\paragraph{Typing}

As already stated, the specification of functions is well addressed by
types. There are a number of approaches in the literature on using types
to synthesize functional programs.
The norm is to synthesis programs according to an (existing) type system,
with the guarantee that the programs type checks.
Important features include being parametric polymorphism, algebraic data types,
and structural recursion over these types.
Modern systems are able to utilize refinement types as specification, but also
direct the search \citep{polikarpova2016program}. 

Type directed synthesis usually takes an approach that is very similar to
theorem provers. As the program will need to conform to a type according to
the rules of a type system, these rules are actually used to direct the search
\citep{osera2015type}.
The type specification is decomposed according the rules and non-deterministic
choices are made when the premises of the rules need information that is
not captured in the conclusion, i.e.~the current program fragment to be proven.

It is interesting to note that type directed synthesis is quite flexible.
In \citep{frankle2016example} it is noted that examples can encoded in types, meaning
that types are expressive enough to capture logical requirements on programs
as well as inductive synthesis from examples.
In \citep{Gvero2013} types are used direct the generation of code completions
in IDEs. Here types are viewed as set of consistent expressions all being
candidates to be enumerated.



\paragraph{Arbitrary DSLs}

A more ambitious approach is to synthesis programs not restricted to a particular
programming language.
In work by \citep{Wang2017} a domain expert provides a domain specific language
(DSL) and the end-user provides examples. Along with the DSL concrete
semantics (an evaluation function) and abstract semantics (a mapping to abstract
values, e.g.~value ranges) are provided. The domain experts chooses the 
appropriate abstractions made available to the system.

The approach uses finite tree automata (FTA) to encode abstract syntax trees (ASTs)
of the DSL with predicates on the nodes encoding their abstract semantics.
The FTA is used to maintain a set of programs
whose abstract semantics are consistent with the examples.
A ``best'' program is selected among the accepted programs and is checked
against the examples. If unsuccessful the tree automata is modified such that
more nodes with abstract semantics become available and with the guarantee
that the previously selected programs are no longer successful.
This is iterated until a successful program is found.
It is shown in the paper that this system is able to outperform multiple
other purpose built systems. However, a severe limitation of this approach
is that it does not handle DSLs with binders, e.g.~lambda terms and let bindings.

\fi

\chapter{Untyped Meta-Interpretive Learning}

This chapter contains an overview of the existing meta-interpretive learning (MIL) 
approach to one variant of the inductive logic programming problem.
First, we state prerequisites before presenting a formal definition 
of the problem addressed by MIL.
Subsequently the MIL framework is explained, along with the central role of
metarules. Finally, the high-level algorithm is presented in the form of
\metaai{}, an implementation of MIL which incorporates
a powerful extension to allow for higher-order abstractions.

\section{Logic Prerequisites}

We work within the framework of logic programming. A reader unfamiliar with
this topic is referred to \citep{nienhuys1997foundations} for a comprehensive treatment.
The primary logic programming features we will assume familarity with are
logical variables, unification on logical variables, and SLD-resolution (along
with seeing a SLD-tree as a derivation/proof of a goal atom).
We use the language of logic to introduce the requisite concepts. 
The definitions in this chapter mainly follow those of \cite{Cropper2017}'s PhD thesis.

\paragraph{First-order}
A \emph{variable} is a character string whose initial letter is uppercase.
\emph{Function and predicate symbols} are character strings whose initial letter is lowercase.
The \emph{arity} $n$ of a predicate/function symbol $p$ is the number of arguments  
it takes and is denoted as $p/n$.
The \emph{predicate signature} $\predsig{}$ is the set of predicate symbols with
arity greater than 0.
A \emph{constant} is a function symbol with arity zero. 
The \emph{constant signature} $\consts{}$ is the set of constant symbols.
A variable which can be substituted by a constant or 
function symbol is called a \emph{first-order}.
The set of first-order variables is denoted as $\fovars{}$.
A \emph{term} is a variable, a constant symbol, or a function symbol of arity $n$
immediately followed by a bracketed $n$-tuple of terms.
A term which contains no variables is called \emph{ground}.
A formula $p(t_1, \dots , t_n)$, where $p$ is a predicate symbol
of arity $n$ and each $t_i$ is a term, is called an \emph{atom}.
An atom is ground if all of its terms are ground.
The $\neg$ symbol is used for negation.
A \emph{literal} is an atom $A$ or its negation $\neg A$.

\paragraph{Clauses}
A \emph{clause} is a finite disjunction of literals.
Each variable in a clause is implicitly universally quantified. 
A clause that contains no variables is ground.
Clauses with at most one positive literal are called \emph{Horn clauses}.
A Horn clause with exactly one positive literal is called a \emph{definite clause}:

\begin{definition}
A (first-order) \emph{definite clause} is of the form: 
\[ A_0 \leftarrow A_1, \ldots, A_m \]
where $m \geq 0$ and each $A_i$ is an atom of the form
$p(t_1, \ldots, t_n)$, such that $p \in \predsig{}$ and
$t_i \in \consts{} \cup \fovars$. 
The atom $A_0$ is the \emph{head} and the conjunction $A_1, \ldots, A_m$ is the \emph{body}.
\end{definition}

A definite clause with no body literals is called a \emph{fact}. 
A Horn clause with no head, i.e. no positive literal, is called a \emph{goal}.

\paragraph{Higher-order}
For higher-order logic, the quantification of first-order logic is extended
to allow for quantifiers to range over predicate and function symbols. 
A variable which can be substituted by a predicate symbol is \emph{higher-order}.
The set of higher-order variables is denoted as $\hovars$. 
A \emph{higher-order} term is a higher-order variable or a predicate symbol.
An atom which has at least one higher-order term is higher-order.
A definite clause with at least one higher-order atom is higher-order:

\begin{definition}
A \emph{higher-order definite clause} is of the form:
\[A_0 \leftarrow A_1, \ldots, A_m \]
where $m \geq 0$ and each $A_i$ is an atom of the form 
$p(t_1, \ldots, t_n)$, such that $p \in \predsig \cup \hovars$ and 
$t_i \in \consts \cup \predsig \cup \fovars \cup \hovars$. 
\end{definition}

\paragraph{Substitution}
Given a formula with variables $v_1, \ldots, v_n$, simultaneously replacing 
the variables with terms $t_1, \ldots, t_n$ is called a substitution. 
Such a substitution is denoted by
$\theta = \{v_1/t_1, \ldots, v_n/t_n\}$. A substitution $\theta$ unifies atoms 
$A$ and $B$ in the case $A\theta$ = $B\theta$. 

\section{Meta-Interpretive Learning}

This section starts by formally introducing the problem addressed by MIL.
To do so we first need the key MIL concept of metarules.
The second part of this section gives an overview of meta-interpreting
and how metarules lift this notion to meta-interpretive learning.


\subsection{Problem Statement}
\label{sec:untyped-prob}

The user supplies a set of examples $\exs$ and background knowledge $\kb$.
All \emph{examples} $e \in \exs$ are ground atoms over the same predicate name.
The examples $\exs = (\exspos, \exsneg)$ are divided into positive 
and negative examples.
The \emph{background knowledge} $\kb  = \defc \cup \meta$ consists of
definite clauses $\defc$, representing program fragments, and metarules $\meta$. 
The definite clauses also encode the facts (clauses without body) that are asserted.

\begin{definition}
A higher-order formula of the form
\[ \exists \pi \forall \mu ~.~ A_0 \leftarrow A_1, \ldots, A_m \]
where $m \geq 0$, $\pi$ and $\mu$ are disjoint sets of higher-order variables,
is called a \emph{metarule}.
Each $A_i$ is an atom of the form $p(t_1, \ldots, t_n)$ such that 
$p \in \predsig \cup \pi \cup \mu$ and
each $t_i \in \consts \cup \predsig \cup \pi \cup \mu$.
\end{definition}

Metarules differ from higher-order definite 
clauses in that they allow existential quantification.

Table \ref{tab:metarules} lists common metarules, a selection of which will be
used throughout this document.
Note that we elide the quantifiers, e.g.~the full definition of
the Identity metarule is 
$\exists P \exists Q \forall A \forall B~ P(A,B) \ot Q(A,B)$.
When quantifiers are omitted, we always label universally quantified 
first-order variables with names from the start of the alphabet ($A,B,C,\ldots$)
and existentially quantified higher-order variables with names 
starting from $P$ on in the alphabet.

\begin{table}
\begin{center}
 \begin{tabular}{|l | l|}
\hline
\textbf{name} & \multicolumn{1}{|c|}{\textbf{metarule}} 
\\\hline
Identity &\hspace{0.5cm} $P(A,B) \ot Q(A,B)$
\\\hline
Precon &\hspace{0.5cm} $P(A,B) \ot Q(A),R(A,B)$
\\\hline
Curry &\hspace{0.5cm} $P(A,B) \ot P(A,B,R)$
\\\hline
Chain &\hspace{0.5cm} $P(A,B) \ot Q(A,C),R(C,B)$
\\\hline
Tailrec &\hspace{0.5cm} $P(A,B) \ot Q(A,C),P(A,B)$
\\\hline
\end{tabular}
\end{center}
\caption{Common metarules, where variables $A$,$B$, and $C$ are universally
quantified and $P$, $Q$, and $R$ are existentially quantified.}
\label{tab:metarules}
\end{table}

\begin{definition}
\label{def:untyped-cons-hypo}
Given $(\kb, \exs) = (\kb, (\exspos,\exsneg))$ as background knowledge and examples,
respectively, a program $H$ is a \emph{consistent hypothesis}, 
denoted $H \cup \kb \entails \exs$, if all positive examples are entailed
by the program, and none of the negative examples are.
\end{definition}

\begin{definition}
A \emph{MIL learner} takes an input $(B, E)$ and either outputs a definite 
program  $H$ that is a consistent hypothesis for the input, or terminates
stating failure to find a program.
\end{definition}

We consider MIL learners that have the additional guarantee that they return
programs that are optimal in a textual complexity sense.
The optimizing criterion used is the number of clauses in the program.
The guarantee is that there is no other consistent hypothesis which has fewer clauses.

\subsection{Meta-Interpreting and Metarules}

A Prolog meta-interpreter evaluates a Prolog-like language by unifying a 
goal with a head of one of the first-order clauses that it has available.
The atoms in the body of the unified clause become new goals subject to
the same procedure. For example, SLD-resolution on the goal 
$ancestor(alice,charlie)$, given the definite clauses
\begin{align*}
&parent(alice,bob)\\
&parent(bob,charlie)\\
&ancestor(A,B)\ot parent(A,B)\\
&ancestor(A,B)\ot parent(C,B),ancestor(A,C)
\end{align*}
yields by resolution with the last clause (unifying the goal with the head of
this clause) that the goals required to prove
become $ancestor(alice,C)$ and $parent(C,charlie)$. To prove the first of
these two goals the first clause with $ancestor$ as head is chosen 
(non-deterministically out of the two options). The two goals
are now $parent(alice,C)$ and $parent(C,charlie)$. Unifying the first goal
with the first fact fixes $C$ to $bob$ allowing both goals to be proven
by the asserted facts.


The Meta-Interpretive Learning framework is a meta-interpreter that additionally
tries to unify a goal with the head of a metarule, which is
a higher-order clause.
Upon selecting a metarule to prove the goal, the unification 
is saved in the form of a \emph{meta-substitution}. 

\begin{definition}
Let $M$ be a metarule with the name $x$, $C$ be a horn clause,
$\theta$ be a unifying substitution of $M$ and $C$, and 
$\Sigma \subseteq \theta$ be the substitutions where the variables are all 
existentially quantified in $M$, such that
$\Sigma = \{v_1/t_1, \ldots , v_n/t_n\}$.
Then a \emph{meta-substitution} for $M$ and $C$ is an atom of the form:
$sub(x, [v_1,\ldots,v_n]\{v_1/t_1, \ldots , v_n/t_n\})$, where the second
argument is a list of
logical variables with the appropriate substitution applied.
\end{definition}

Saved meta-substitutions are reused for proving goals that are encountered
later on, becoming available as unification targets like the definite clauses 
in the background knowledge.
Upon completing the proof of all goals the meta-substitutions contain a description
of the program. To obtain the program the saved substitutions are applied
to the named metarules. Each such instantiated metarule corresponds to an
invented clause of the program, with the predicate symbols being grounded 
through the substitution.

\begin{definition}
Let $(\kb, \exs)$ be a MIL input and $H$ a hypothesis. 
Then a predicate $p$ is an \emph{invention} if it is in the predicate signature
of $H$ and not in the predicate signature of $\kb \cup \exs$.
\end{definition}

Metarules form the heart of the MIL approach to learning.
Metarules allow for introducing a strong bias to the hypothesis space. This
is due to the clauses of programs in the hypothesis having to conform to the
structure of the supplied metarules. Metarules hence give great control over
the size of the search space, as well as how the search space is traversed.
They also allow the user to specify which features they consider likely to be
needed by the program.
Examples are (tail-)recursion and higher-order clauses.

In recent work by \cite{Cropper2016} the MIL framework is expanded so as
to allow background predicates and inventions with higher-order arguments.

\section{\metaai{}: Untyped Abstracted MIL}

The implementation of MIL that we consider as a basis in this document is
the \metaai{} system introduced by \cite{Cropper2016}. This section shows how
to specify a problem instance in Prolog code and goes over a high-level version
of the algorithm.

The improvement of
the \metaai{} versus the original \metagol{} implementation of MIL \citep{Muggleton2014}
is in allowing higher-order predicates to be part of the program, in particular
predicates that are termed \emph{abstractions}.
As shown in \citep{Cropper2016}, learning higher-order inventions has benefits
in terms of finding smaller programs, which leads to reduced learning times,
and can achieve high accuracy with a reduced number of examples.

\begin{definition}
A \emph{higher-order definition} is a set of higher-order definite clauses 
with matching head predicates.
\end{definition}

The following definition of $map$, operating over lists\footnote{A list in Prolog is either $[]$, the empty list, or is a cons of a head $H$ and a tail $T$, denoted as $[H|T]$.}, is an example of a higher-order definition:
\begin{align*}
    &map([],[],F) \ot\\
    &map([A|As],[B|Bs],F) \ot F(A,B), map(As,Bs)
\end{align*}

Any clause which takes an argument that is a predicate is termed an
\emph{abstraction}:
\begin{definition}
A higher-order definite clause of the form 
\[
    \forall \tau ~ p(s_1, \ldots, s_m) \ot q(u_1,\ldots,u_n,v_1\ldots,v_o)
\]
where $o >0$, $\tau \subseteq \fovars \cup \hovars$, $p,q, v_1, \ldots, v_o \in \predsig$,
and $s_1, \ldots, s_m,u_1, \ldots, u_n \in \fovars$, is called an \emph{abstraction}.
\end{definition}

The following clause, which increases each element of a list by one, is an example 
of an abstraction:
\[
    increment\_all(A,B)\ot map(A,B,succ).
\]

\subsection{Representation of Background Knowledge}
\label{sec:untyped-prolog-repr}

For Prolog code we will use the \texttt{typewriter} font.
Definite clauses in Prolog are very similar to the logic syntax, except the
arrow ($\ot$) is replaced by an ASCII version (\texttt{:-}) and every clause
is terminated by a dot. For clauses with empty bodies the ersatz arrow
is omitted.

The background knowledge is separated into three parts: \emph{primitive clauses},
\emph{interpreted clauses} and metarules.
Primitive clauses are just standard Prolog definite clauses, as such they do not
involve atoms whose predicate symbol are variable.
Such clause definitions are added to the background knowledge by asserting that
they are available as a primitive, e.g.~\texttt{id(X,X).}~is a definite clause, and by
asserting \texttt{prim(id/2)} the predicate is added to the background knowledge.

The interpreted clauses are Prolog clauses that do involve body atoms whose
predicate symbols are existentially quantified, and in particular are arguments
to the head of the clause. Because during the synthesis algorithm a higher-order
argument might remain undetermined until the body is evaluated, the normal 
Prolog evaluation strategy is not sufficient (see the below algorithm).
These clauses are asserted to be \emph{interpreted}, e.g.~\texttt{interpreted(map/3)}.

Asserting \texttt{metarule(Name,Subs,(Head:-Body))} adds a metarule
with the name \texttt{Name} to the background knowledge.
\texttt{Subs} is a list of the existential (higher-order) variables
occuring in the metarule, \texttt{Head} is the head atom of the rule and 
\texttt{Body} is a list of the body atoms of the rule. 
Take the chain rule as an example:
\[
    \texttt{metarule(chain,[P,Q,R],(P(A,B):-[Q(A,B),R(B,C)])).}
\]

\subsection{The Algorithm}
\label{sec:metaai-algo}

Figure \ref{fig:metaai} contains the Prolog code for the abstracted MIL algorithm,
which has support for higher-order abstraction.

\begin{figure}[t]
\begin{center}
\begin{tabular}{c}
\begin{lstlisting}
learn(Pos,Neg,Prog):-
  prove(Pos,[],Prog),
  not(prove(Neg,Prog,Prog)).
prove([],Prog,Prog).
prove([Atom|Atoms],Prog1,Prog2):-
  prove_aux(Atom,Prog1,Prog3),
  prove(Atoms,Prog3,Prog2).
prove_aux(Atom,Prog,Prog):-
  prim(Atom),
  call(Atom).
prove_aux(Atom,Prog1,Prog2):-
  interpreted((Atom:-Body)),
  prove(Body,Prog1,Prog2).
prove_aux(Atom,Prog1,Prog2):-
  member(sub(Name,Subs),Prog1), 
  metarule(Name,Subs,(Atom:-Body)),
  prove(Body,Prog1,Prog2). 
prove_aux(Atom,Prog1,Prog2):-
  metarule(Name,Subs,(Atom:-Body)),
  prove(Body,[sub(Name,Subs)|Prog1],Prog2).
\end{lstlisting}
\end{tabular}
\end{center}
\caption{The \metaai{} algorithm.}
\label{fig:metaai}
\end{figure}

\paragraph{Invocation}
The first clause, \texttt{learn}, is the invocation point of the algorithm.
After the user has asserted their background knowledge, they run the algorithm
by calling \texttt{learn} with a list of their positive and a list of
their negative examples whereby, upon success, \texttt{Prog} gets instantiated
to a program.
The algorithm starts out with an empty program, the first argument
to \texttt{prove} in the body of \texttt{learn}. If a program is found entailing
the positive examples it is checked against the negative examples. Were one
of the negative examples to be entailed, backtracking occurs and 
the search will continue for another program entailing the positive examples.
When all the positive examples are entailed by the found program, and none of
the negative examples are, the search successfully terminates.

\paragraph{Search}
The \texttt{prove} procedure is mainly for choosing the first goal in a list
of goals to hand off to the \texttt{prove\_aux} procedure. The mutual recursion
of \texttt{prove} and \texttt{prove\_aux} represents a left-most
depth-first search over the goals that arise during synthesis.
When a goal has been proven successfully by \texttt{prove\_aux} it may have
changed the program under construction, hence the new program is passed
along for proving the remaining \texttt{Atoms} goals. As the last disjuctive
clause \pc{prove\_aux} will always succeed with introducing a new invented
clause, an additional mechanism, not shown in
the code, of an limit on the number of inventions is used.
The limit makes sure that the depth first search does not run off and keeps 
creating invented clauses for goals that are difficult (or impossible) to prove.
By re-running the algorithm with increasing invention limit all possible 
programs are considered.

\paragraph{Additional proving rules}
For the most important part of the algorithm, 
we consider each of the disjunctive bodies of \texttt{prove\_aux} separately:
\begin{itemize}
\item The first disjunct tries to prove an atom (whose predicate symbol might
    be a variable) by unifying the atom's predicate with a predicate asserted 
    as background knowledge, considering all options based on the predicate's arity.
    When this 
    unification succeeds the atom is unified with the head of the predicate,
    whereupon
    it is known that \texttt{Atom} represents a first-order atom which 
    Prolog is able to evaluate. Evaluation is
    invoked by \texttt{call(Atom)}. If successful the atom's predicate symbol
    will remain fixed for for any goals subsequently generated.
    If the call to \texttt{call}
    fails, or having tried all possible programs 
    which included this particular choice for the atom causing Prolog to 
    backtrack to this decision point,
    all remaining predicates in the background knowledge will be tried in the same
    manner. If none lead to a successful program being found this 
    disjunct of \texttt{prove\_aux} fails, causing the next disjunct to be tried.
\item The second disjunct tries to prove the goal atom by unifying the atom
    with the head of one the head of interpreted background predicates.
    Again, if the predicate symbol is a variable it will become fixed upon
    successful unification.
    By unifying the head of an interpreted clause the corresponding 
    variables in the \texttt{Body} change accordingly, thereby making the
    body of the interpreted clause the goals that need to be proved to be 
    able to conclude the unified head.
\item Upon failure to find a successful program by proving the atom by
    an interpreted predicate, the algorithm checks if it may make use of any
    of the invented clauses that it keeps track of in the form of meta-substitutions.
    For each meta-substitution that is already in the program the algorithm
    tries to unify the atom with a fresh head of the meta-substitution's metarule,
    under the restrictions of the variables that are already fixed in the substitution
    list \texttt{Subs}. If it succeeds, the body of the reused invented clause 
    represents the goals that need to be proven.
\item Finally, the last disjunct applies when all other disjuncts failed to
     lead to a successful program. Now a metarule is instantiated, with no
     conditions on the \texttt{Subs}titutions for the existentially quantified
     predicate symbols. This means that \texttt{Atom} is used to create a new 
     invention. The body atoms of the metarule become the goals to be proven and
     the invention is remembered by prepending a new meta-substitution to the
     existing program.
\end{itemize}

In overview: a depth-first search is conducted over partial programs, starting
with the examples as goals. To prove each goal, first it is attempted to be
proven by one of the primitive and interpreted predicates. In the first case
Prolog is able to evaluate whether the atom holds, and in the second case
the body of the higher-order predicate remain as goals to be proven by
meta-interpretation. Otherwise the saved invented clauses are tried, again
leading to additional goals to be proven. As a last resort an atom can be proven
by creating a metarule-guided invention, where the body of the invention will 
likely contain goal atoms with predicate variables.
If there are no more goals to prove a successful program has been found.

\chapter{Typed Meta-Interpretive Learning}

Starting with this chapter, this document will focus on presenting novel
contributions.
This chapter highlights the potential benefits of adding typing to  
Meta-Interpretive Learning (MIL) before the subsequent chapters focus on
tackling the search space reduction issue. After some preliminaries regarding
types, three areas are discussed where types bring with them great potential.
The first is the ability to prune away parts of the search space. Second, we 
consider the idea that types can provide guidance in choosing how to traverse
the search space. The final highlighted feature is that it might be possible 
to inform the user that the background knowledge that they provided is
insufficient for solving the synthesis problem.

\section{Typing Definitions}
\label{sec:typed-type-def}

As is usual in presenting logic, the definitions of the previous chapter did
not include any type decorations.
For our purposes a \emph{type} is a set of values. A value that belongs
to a type is called an \emph{inhabitant}.
Normally it is known for functions and predicates for which values a 
predicate or function makes sense.
We will restrict the domain of predicates/functions to just these values.
We will use such domain restrictions, along with co-domain restrictions,
in defining the formal types of predicates and functions. As a general
reference for refinement types with polymorphism see \citep{freeman1991refinement}.

\paragraph{Polymorphic types}
The types we consider are constructed from a collection of base types $B$,
among others, the integers, $int$, and the Booleans, denoted $bool$. 
We also allow \emph{holes} in our types, for which we use \emph{type variables}.
The Cartesian product ($\times$) is used to construct types whose values
are tuples of values from the supplied types.
The arrow ($\to$) is used to construct function types from other types.
We have that the grammar\footnote{This is in essence the type grammar of
System-F \citep{girard1971extension}.} for our types is stated as
\[ T ::=~b_i~\mid~X_j~\mid~T \times T~\mid~T \to T, \]
where $b_i \in B$ and $X_j$ represents type variables.
A type is \emph{higher-order} if a function type appears in an argument position.

\begin{definition}
A predicate $p$, with arity $n$, is a function whose result type is Boolean.
The \emph{predicate type} of $p$ is fixed by the types of its $n$ arguments.
This is denoted as:
\[ p : T_1 \times \ldots \times T_n \to bool \hspace{1em}\text{or}\hspace{1em}  p(a_1, \ldots, a_n) : T_1 \times \ldots \times T_n \to bool \]
where $a_i : T_i$ are the formal arguments of $p$.
\end{definition}

We will abbreviate the predicate typing $p : T_1 \times \ldots \times T_n \to bool$
to the compacter $p : [T_1,\ldots,T_n]$. In what follows we will be lax in 
distinguishing between the type of an atom (always $bool$) and the type of
the atom's predicate. 

For function typing a similar definition holds, except that the co-domain
is not fixed to be Boolean. For functions we do not use shorthand notation.

A type is \emph{polymorphic} when it is parametrized by one or more types,
i.e.~it contains type variables.
We always consider type variables to be universally quantified.
For example, the polymorphic type of lists is $list(X)$.
Replacing the \emph{type parameter} $X$ with a type resolves the polymorphism, 
e.g.~take $X=int$, then $list(int)$ is the type of lists 
of integers, which is no longer polymorphic. 
A predicate is polymorphic when one of its argument types is.

\paragraph{Refinement types}
Given the above description of types we define the notion of
refinement type, and refer to unrefined types as \emph{simple (polymorphic) types}.
A \emph{refinement type} $\{ x : T \mid \phi\}$ is the subset of the 
type $T$ consisting of the values $x$ that satisfy the formula $\phi$. 
For this work we only allow refinements to occur on predicates.

In order to discuss the meaning of refinements on predicates, hereby
a reminder regarding the interpretation of predicates:
a predicate implements a relation, which identifies tuples, 
by mapping arguments included in the relation to $\top$ (true)
and otherwise returning $\bot$ (false).

Let $p(a_1, \ldots, a_n):T_1\times\ldots\times T_n\to bool$ be a predicate with its type.
A refinement $\phi$ for this type is a proposition that can mention any of the variables $a_i$.
The refined type is denoted as $T_1\times\ldots\times T_n\to bool\<\phi \>$, with
shorthand $[T_1,\ldots,T_n]\<\phi\>$.
The semantics of this type is that $\phi$ denotes a necessary 
condition of any $\top$ inhabitant of the Boolean result type.
In conventional refinement type notation this corresponds to 
\begin{align*}
    bool\< \phi \> &= \{ b: bool ~\vert~ b {=}\bot\vee (b {=} \top \implies \phi) \}\\
                &= \{ b: bool ~\vert~ b {=}\bot \vee (b{\neq} \top \vee \phi) \} \\
                &= \{ b: bool ~\vert~ \neg\phi \implies b {=} \bot) \}.
\end{align*}

Given a valuation for the arguments of $p$, the proposition $\phi$ has a truth value.
When a valuation makes $\phi$ false, the values of the arguments to the predicate 
cannot be identified by the relation that the predicate is representing, 
as the predicate must return false for these arguments. Hence refinement are
a way of restricting types to make them more accurate.

As an example consider the higher-order $map$ predicate, as of it is a good candidate
for a refinement type. 
The $map$ relation guarantees that the list arguments are of the same length.
This is encoded in a refinement type as follows:
\[ map(A,B,F):[list(T),list(S),[T,S]]\<length(A)=length(B)\>\]

It is important to note that simple polymorphic types are precisely our refinement
types with all refinements just being the $true$ proposition (not imposing
any restrictions). In this text we will elide types (and their refinements)
when they are not of interest to the discussion at hand.

\section{Pruning the Search Space}
\label{sec:typed-pruning}

The MIL approach is surprisingly effective gives its simplicity. 
The algorithm is quite naive in regard to what possibilities for programs
it is willing to consider.
One major shortcoming, inherited from its logic programming origins,
is its irreverence of types. Many of the logic programming languages do not
concern themselves with types. In particular, Prolog does not have predicate
types.

Consider having to prove a goal atom $P([true,false,true],[1])$, with
$P$ a predicate variable. We can immediately see that its type 
is $[list(bool),list(int)]$.
Let us suppose that the given background predicates include the predicates
$succ : [int,int]$ (the successor relation on integers),
$tail : [list(X),list(X)]$ (the list tail relation), and 
$map : [list(X),list(Y),[X,Y]]$ (the relation that maps another relation
over elements). The metarules we consider for this example are
Chain and Curry (see table \ref{tab:metarules}).
Clearly any predicate in the background knowledge over non-list types 
need not be tried as a direct substitution for $P$.
Also many inventions and interpreted predicates need not be tried, such as 
$inv(A,B)\ot tail(A,C),map(C,B,succ)$, purely due to the typing of this
clause being incompatible with the type of the example goal.

\subsection{A Worked Example}
The work involved in the above example is considerable given how easy it is to see,
based on typing, that it cannot succeed. First $P$ will be unified with $inv$
whereupon unification of the atoms assigns $A=[true,false,true]$ and $B=[1]$.
The body atoms $tail([true,false,true],C)$
and $map(C,[1],succ)$ become the new goals, being passed of to be proven
by a recursive \texttt{prove} call. 
The atom $tail([true,false,true],C)$ is proven in the first disjunct of 
\texttt{prove\_aux},
assigning $C=[false,true]$. Now the second goal $map([false,true],1,succ)$ is considered.
None of the primitives predicates unify with the predicate name, 
hence the interpreter clause of \texttt{prove\_aux} is evaluated.
Here the head of map is first checked for
whether it happens to unify with base case over empty lists, which it does not.
Subsequently the head of the inductive disjunct of $map$ is unified, whereupon
$succ(false,1)$ and $map([true],[],succ)$ become the new goals.
Again \texttt{prove} is called recursively, 
whereupon $succ(false,1)$ is selected to be proven by delegating the 
evaluation of the atom to Prolog itself.
Prolog determines that this atom does not hold.
It is only at this point that \metaai{} is able to decide
that the invented clause $inv(A,B)\ot tail(A,C),map(C,B,succ)$ could not 
be used, leading the meta-interpreter to backtrack to the decision point of 
having to pick a way of proving the original atom $P([true,false,true],[1])$.

\paragraph{Type Checked}
In contrast, when we consider types, a unification attempt is enough to
determine that the above work is unnecessary.
The type annotations for the atoms are
\[ P([true,false,true],[0]):[list(bool),list(int)] \] and
\[ inv(A,B):[list(int),list(int)]\ot tail(A,C),map(C,B,succ). \]
Type checking corresponds to checking whether the type of $inv$ can
 have zero or more of its variables unified such that it corresponds to
$P$'s type. Clearly the first arguments' types cause unification to fail,
thereby rejecting the attempt to try to prove the goal atom with the invented
clause.

This is just one example of where simple type checking helps considerably. 
Every time type checking can determine that one of the options considered by
\metaai{} cannot be successful exploration of a part of the search space 
can be skipped over, which is called \emph{pruning}. 
When such parts of the search space contain multiple decisions points considered
by the algorithm the benefit of type checking becomes considerable more
significant.

\subsection{Granularity of types}

Polymorphic types are already quite powerful in pruning parts of the search space.
But suppose we make a simple adjustment to the above example. Instead
of our goal's arguments including Booleans we change these values to integers,
making the example goal $P([1,0,1],[1])$. A type check determines that the atom's
type $[list(int),list(int)]$ is unifiable with $[list(int),list(int)]$,
hence giving the go-ahead to attempt to prove the atom with $inv$.

\paragraph{Unnecessary work revisited}

Because polymorphic type checking is unable to rule out the invented
clause the algorithm will be forced to do the same work as detailed above.
When it reaches the atom $succ(0,1)$ Prolog will instead determine that this
atom does hold. Now even more work will be performed, namely the remaining
goal, $map([1],[],succ)$, will be passed off to \texttt{prove\_aux},
which again checks that this is not a primitive before checking the interpreted
disjunctive clauses of $map$. Thanks to the first argument the base case
does not apply, and because the second argument does not allow itself to be
split into a head and tail the inductive case of $map$ also does not apply.
So in this case, there was not only more work in trying to prove additional 
goals before finally finding out that the invented clause cannot be used,
but we also incurred a small cost for the unification attempt of the type 
checking.

Intuitively, when we look at the example goal and at the invented clause we are
still able to immediately see that this clause cannot work. This is due to
being able to reason about the lengths of the lists involved. Clearly 
all the tuples entailed by the invented clause have as property that the 
first argument has exactly one additional element versus its 
second argument.

\paragraph{Refinement reasoning}

The above noted relation on the arguments of $inv$ is a property well captured
by a refinement type and can be stated as:
\[ inv(A,B):[list(int),list(int)]\<length(A) = length(B) + 1\> \]

This type states that the predicate can only entail the arguments if the
first list has a length exactly one longer than the second list.
If we now return to the type checking of $P([1,0,1],[1]):[list(int),list(int)]$
against the type of $inv$ we not only try to unify the polymorphic types, but
we also check that the refinement is still satisfiable, i.e.~does not preclude
the arguments from being entailed. What happens is that the heads are 
speculatively unified, leading to the refinement becoming instantiated to 
$length([1,0,1]) = length([1]) + 1$. For this example checking the
refinement can simply be done by expanding the definition
of $length$, which directly derives the contradiction $3 = 2$.
The refinement type hence declares that these values are never part of
the relation encoded by the $inv$ clause. At this point the algorithm will give
up on $inv$ and will try the next option for proving the example atom.
Refinement type checking is hence able to prune parts
of the search space that simple polymorphic type checking is not able to. 

There is, of course, the issue that reasoning over (instantiated) refinements 
is not entirely trivial. For the example the work needed to come to a contradiction
was very simple. The main challenge for the usage of refinement types is in
identifying refinements that suitably abstract from the predicates, and in
finding algorithms that are able to very rapidly reason on the constraints 
specified by the refinements.

\paragraph{Composing refinements}

The above identified refinement for $inv$ is not something a user would be
able to supply to the system, as the clause in question is an invention. The body
atoms on the other hand are background predicates which the user can
annotate with appropriate refinements. The refinement for $tail(A,C)$ could well
be 
\[ length(A) = length(C) + 1 \]
while $map(C,B,F)$'s refinement can be taken to be $length(C) = length(B)$.

The structure of the clause now indicates how these refinements compose,
namely $length(A) = length(C) + 1 \wedge length(C) = length(B)$.
Clearly, this approach to refinements is quite powerful in that is it able
to capture sensible refinements even for invented clauses.

\section{Types Directing the Search}
\label{sec:typed-direct-search}

The \metaai{} algorithm uses a basic depth-first search procedure for determining
the order in which goals are proven.
A novel observation is that this search might be better steered by the 
information available in the goals that remain to be proven.
The idea is that the ordering of the goals in the search needs to be guided by
a heuristic function. Correspondingly the search algorithm needs to be adjusted so as
to implement the best-first search procedure. Best-first search keeps track
of all the nodes left to explore, in our case the current goals left to prove,
and selects among them the best node to explore/prove next (according to the heuristic).
As it is non-obvious what the main features of interest are and 
how much weight they should have relative to each other, work is needed
in determining good heuristic ordering functions.

The goals are the basic units whose proof order needs to be decided on,
which, in the untyped case have quite limited information available.
Obvious considerations are to prioritize those goals that are already entirely
ground, including the predicate symbol(s). Amongst them the atoms with primitive
predicate symbols would need to be sorted first as an inconsistency on such a goal
would lead to immediate backtracking. Next one would sort on the 
number of variables in the goals, preferring atoms with fewer variables as
these are less likely to incur non-determinism in their proof.
Issues in ordering start to crop up when deciding on the ordering of atoms 
with different arities and whether the proportion of the number of ground 
arguments might not be a more useful feature than just the sheer number of 
them.

\paragraph{Type Guidance}
The introduction of types is interesting for heuristic search because it adds
additional information to use for determining the ordering.
As we saw in the section on pruning the search space, types of the predicates
are often already known while the arguments of such predicates are still variables.
During synthesis, e.g.~of a new invention, there are also variables in the types,
meaning that some types are more ``complete'' than others.

A heuristic in the typed environment could consider ordering goals with 
complete types (i.e. no variables in them) first, as these types 
will be able to rule out large parts of the background predicates.
Subsequently it would be useful to consider how to order the predicates
with incomplete types. To define a useful heuristic it would then be necessary
to weight the type features relative to the non-type features.

\paragraph{Implementation}
The accompanying synthesis system implemented for this project has best-first
search available as an option. There is a proof-of-concept heuristic 
that looks at type information and the number of ground arguments.
While the implementation is able to show some benefits in exploring less 
of the search space, the heuristic needs more thought before the value of 
heuristic search in MIL can be thoroughly evaluated.
For now we defer work on heuristics.

\section{Showing Insufficiency of the Background Knowledge}
\label{sec:typed-insuff}

A third area where type annotated predicates might be useful is in
determining that the predicates that the user supplied will never be able
to prove the goals. Clearly it would be very beneficial for the user to be aware
that the background knowledge that they are providing is insufficient for solving
the problem. Ideally the synthesis algorithm is able to detect, no matter
the size of the programs it will consider, whether there are any sensible programs 
at all in the hypothesis space.

Deciding on whether the background predicates can be composed is something
that type checking is already able to reason about. Hence the idea is to
leverage the user provided type annotations to check whether the background
predicates compose at all.

One simple approach is to take the type of the examples and for each metarule
try types for the body atoms. The types one would try are the ones on
the background predicates. Simple types or refinement type checking
might indicate that not a single sensible composition was found. Such a result
would only prove that no single clause programs can be constructed for the given
examples. If we relax the condition that the clause must  match with the type 
of the examples we could show that no invention exist that is composed 
from just background predicates.

The above reasoning is not strong enough to make claims about the non-existence of
programs that are only slightly more complex. The main issue is that the type
of an invention does not have to be the same as the type of any of the background
predicates. Determining what types can be generated for inventions would 
be one approach to trying to show insufficiency of the background knowledge.

In this thesis we limit ourselves to presenting the above argument for why 
types could be useful for showing insufficiency of the background knowledge, 
but will leave a solution to the problem for future work.

\chapter{Synthesis with Polymorphic Types}

This chapter introduces polymorphic types to Meta-Interpretive Learning (MIL). 
The main motivation for the introduction of types is so that type checking 
is able to prune parts of the search space.
For a motivating example please refer back to section \ref{sec:typed-pruning}.

We restate the problem that our system is able to address, noting the type
annotations we expect the user to provide.
We introduce the \metast{} algorithm, which extends the \metaai{}
algorithm with polymorphic type checking.
We look at how the type annotated background knowledge, along
with unification, can propagate types through to the newly derived goals.
In the section that follows we look at inferring polymorphic types,
i.e.~generate the most general type of each clause, and of the program itself.

Towards the end of the chapter we argue for the algorithm's correctness
in terms of being sound and complete (relative to \metaai{}).
We close out the chapter by a theory result regarding the influence of types
on the size of the search space, and perform experiments validating the work of
introducing polymorphic types.

\section{Revised Problem Statement}
\label{sec:simple-problem}

We assume familiarity with the synthesis problem statement in 
section \ref{sec:untyped-prob},
as well as with the way users provide background knowledge to 
the \metaai{} algorithm, as discussed in section \ref{sec:untyped-prolog-repr}.
Instead of reiterating the complete definitions we note the needed
adjustments to these definitions.

In addition to supplying examples, the user now supplies a type
for the examples, that is a single type that is consistent for all examples.
For the background knowledge we stipulate that each atom that can become a goal 
within the MIL algorithm is annotated with a (polymorphic) type.
For the primitive clauses only the head of the clause needs to be assigned a
type. As an example, the (primitive predicate) $head$, which returns the first
element of a list, was added to the untyped
background knowledge as \pc{prim(head/2)} (with the $2$ signifying the predicate's
arity). In the setting of typed synthesis predicates need to be asserted
with their type, that is, \pc{prim(head:[list(X),X])}
adds the $head$ relation to the (typed) background knowledge.

The interpreted predicates and the metarules need types for their head atoms
as well as for their body atoms. The variable names used for the types are
shared in a definition of a clause,
which is the main means of propagating type information.

For adding interpreted predicates to the untyped background knowledge,
the definition of the interpreted $map$ predicate was stated as a normal
Prolog definition, plus an \pc{interpreted} assertion:

\begin{center}
\begin{tabular}{c}
\begin{lstlisting}
map([],[],F).
map([A|S],[B|T],F):-F(A,B),map(S,T,F).
interpreted(map/3).
\end{lstlisting}
\end{tabular}
\end{center}

For the typed assertions of interpreted predicates, we need to 
annotate the atoms of the body of an interpreted clause in addition to the
head atom. Following the notation used for metarules, the definition of the 
clause is moved into the
\pc{interpreted} assertion\footnote{Note that we make heavy use of Prolog's square 
    bracket list notation here: to operate over values, as the data structure
containing the multiple body atoms, and as a convenient syntax for predicate types.}:
\begin{center}
\begin{tabular}{c}
\begin{lstlisting}
interpreted(map([],[],F):[list(X),list(Y),[X,Y]] :- []).
interpreted(map([A|S],[B|T],F):[list(X),list(Y),[X,Y]] :- 
             [F(A,B):[X,Y],map(S,T,F):[list(X),list(Y),[X,Y]]]).
\end{lstlisting}
\end{tabular}
\end{center}

Metarule definitions now include types on their atoms. 
For the untyped Chain metarule the assertion 
\pc{metarule(chain,[P,Q,R],(P(A,B):- [Q(A,C),R(C,B)]))} sufficed. 
For adding the Chain metarule to the typed background knowledge the assertion
becomes:

\begin{center}
\begin{tabular}{c}
\begin{lstlisting}
metarule(chain,[P,Q,R],
          (P(A,B):[X,Y] :- [Q(A,C):[X,Z],R(C,B):[Z,Y]]).
\end{lstlisting}
\end{tabular}
\end{center}

Definition \ref{def:untyped-cons-hypo}, on consistent hypotheses,
only needs adjustment in that the generated clauses of the program have
simple types on their atoms.

\begin{definition}
A \emph{Typed MIL learner} takes examples and 
background knowledge with polymorphic types and outputs a polymorphic
typed definite program $H$ that is a consistent hypothesis for the input.
\end{definition}

\section{\metast{}: Type Checking through Unification}

One of the major strengths of Prolog (and logic programming in general) is
support for logical variables and unification on these variables.
Unification plays an important role in most type checking and type inference 
algorithms (see, for example, \citep{kanellakis1989polymorphic}).
For our purposes we can reduce all type checking to unification. 

Unification on types serves two purposes during synthesis. 
For type checking we want to show
that a general (polymorphic) type can be instantiated to a
more specific type, e.g.~as
in the case of \pc{head:[list(X),X]}, and proving an atom \pc{P([1],1):[list(int),int]}.

At the same type we often have type variables in atoms' types due to the exploratory
nature of synthesis (e.g.~the \pc{Z} variable in the Chain metarule definition
of the previous section will always be just a variable right after the metarule is 
used for inventing a clause).
These type variables represent freedom in how to interpret 
the atom, in particular these variables represent that the type
of the arguments is as of yet undetermined.
Non-ground arguments (i.e.~arguments with variables in them)
will often initially have a variable for their type.
For example, suppose we are trying to prove atom \pc{P([1],B):[list(int),X]}.
Unification with the \pc{head} primitive on the predicate name and type fixes 
this atom's type  to \pc{[list(int),int]}.

In either case, whether we are type checking
or making types more specific, when unification succeeds we know we can proceed in
our proof attempt.

\subsection{Derivation and General Types}
The major change we introduce to the \metaai{} algorithm is 
that each atom has both a \emph{Derivation Type} (\pc{DT}), 
and a \emph{General Type} (\pc{GT}), that is \pc{Atom:DT:GT}.
The derivation type is always an instance of the general type of the atom,
which can be seen as instantiating any type parameters in the general type to
correspond to the types of the values that the arguments have taken on in the
derivation.

The derivation type is for keeping track of the type of the atom as 
it is used in the proof of the entailment for the
user provided examples. The derivation type hence is as accurate as possible
taking into account the \emph{values} that the atom's arguments have taken on.
The values an atom's arguments take on during the algorithm 
ultimately derive from the values of the user provided examples.
As such the derivation type can be seen as deriving
from the example goals given to the algorithm, which are the root of the
derivation tree (implicitly) constructed by the algorithm. The algorithm
will terminate when a successful complete derivation 
has been found for the example goals, 
at which point all arguments in the derivation tree will have been instantiated
to values. Therefore the derivation types for atoms in a successful derivation
are never polymorphic (contain no type parameters).

The general type, \pc{GT}, on the other hand is not concerned with being accurate
with regard to the values that the atom has taken on in the current derivation.
Instead the general type sees the atom as the head atom of a
definite clause and maintains the type that the atom's arguments 
\emph{may} be instantiated with, i.e.~its polymorphic type.
The general type is hence
determined by the constraints imposed on the arguments' types based on 
the types of the atoms that become goals to prove this atom.
The general types can hence be seen as deriving from the leafs in the
derivation, up through the subtree under this particular atom.
In particular, the general type is not influenced by the example goals 
(and type) given to the algorithm.
Upon a successful derivation being found, in general,  it is \emph{not} the 
case that the general type does not contain variables, 
instead the point of the general types (of the head atoms of inventions) 
is that they may be polymorphic.

\paragraph{An example}
As an example, suppose we have to prove \pc{P([1,2,3],2):DT:GT}.
Its derivation type will already be fixed as \pc{DT=[list(int),int]},
due to the algorithm being able to maintain fully accurate derivation types 
for arguments instantiated with values (given that the type of the example 
goals was provided and the background predicates had accurate types).
As we have not yet proven the atom the general type will usually be 
$GT=[X,Y]$.\footnote{
In the case that the argument variables are shared with other atoms in a clause
it might be that there are already constraints imposed on the general type
(which would necessarily also be present in the derivation types). 
Such constraints make the type more specific than just two variables.}
Suppose the invented clause the algorithm comes up with to 
prove the atom is \pc{inv(A,B):-tail(A,C),head(C,B)}.
We have that the (general) type of the body atoms is
\pc{tail:[list(X),list(X)]} and \pc{head:[list(Y),Y]}. The general type
of the \pc{inv} derives from these general types as being \pc{[list(X),X]}.

As part of proving the goal atom by this invention we unify the head of the
invention with the goal to obtain \pc{inv([1,2,3],2):[list(int),int]: [list(X),X]}.
Now the body atoms of the invention need to proven, which are 
\pc{tail([1,2,3],C):[list(int),list(int)]:[list(X),list(X)]} and atom
\pc{head(C,2):[list(int),int]:[list(Y),Y]}.
The derivation types have the correct type
for \pc{C} imposed by the typing of the background predicates of \pc{tail}
and \pc{head} forcing unification on \pc{C}'s type based on the type of the
other arguments. Clearly the proof succeeds when \pc{C=2} is forced by
proving the \pc{tail} atom.

This example makes the utility of having a general type for \pc{inv} apparent.
If we would need to prove another goal such as 
\pc{Q([[1],[2]],[2]]:
[list(list(int)),list(int)]:QGT}, we are not restricted to
just being able to retrieve the derivation type that was used in the proof
for the previous goal. Instead we can check that 
this derivation type is also an instantiation of the general type of \pc{inv},
thereby allowing reuse of the invented predicate.

\subsection{The Algorithm}
The polymorphic type checking contribution to MIL of this thesis has resulted in
the \metast{} algorithm in figure \ref{fig:metast}.
The code in this figure is the \metaai{} algorithm, except for the bold code
which achieves type checking. 

\begin{figure}
\begin{center}
\begin{tabular}{c}
\begin{lstlisting}
learn(Pos,Neg,(*\pc{\textbf{Type}}*),Prog):-
  (*\pc{\textbf{map(decorate\_types(Type),Pos,PosTyped),}}*)
  (*\pc{\textbf{map(decorate\_types(Type),Neg,NegTyped),}}*)
  prove((*\pc{\textbf{PosTyped}}*),[],Prog),
  not(prove((*\pc{\textbf{NegTyped}}*),Prog,Prog)).
prove([],Prog,Prog).
prove([Atom|Atoms],Prog1,Prog2):-
  prove_aux(Atom,Prog1,Prog3),
  prove(Atoms,Prog3,Prog2).
prove_aux(Atom(*\pc{\textbf{:DT:GT}}*),Prog,Prog):-
  prim(Atom(*\pc{\textbf{:DT}}*)),!,
  (*\pc{\textbf{prim(Atom:GT),}}*)
  call(Atom).
prove_aux(Atom(*\pc{\textbf{:DT:GT}}*),Prog1,Prog2):-
  interpreted(Atom(*\pc{\textbf{:DT:-BodyDT}}*)),
  (*\pc{\textbf{interpreted(Atom:GT:-BodyGT),}}*)
  (*\pc{\textbf{combine\_types(BodyDT,BodyGT,Body),}}*)
  prove(Body,Prog1,Prog2).
prove_aux(Atom(*\pc{\textbf{:DT:GT}}*),Prog1,Prog2):-
  member(sub(Name,(*\pc{\textbf{GTinv}}*),Subs),Prog1), 
  (*\pc{\textbf{instance\_of(DT,GTinv),}}*)
  (*\pc{\textbf{GT=GTinv,}}*)
  metarule(Name,Subs,(Atom(*\pc{\textbf{:DT:-BodyDT}}*))),
  (*\pc{\textbf{metarule(Name,Subs,(Atom:GT:-BodyGT)),}}*)
  (*\pc{\textbf{combine\_types(BodyDT,BodyGT,Body),}}*)
  prove(Body,Prog1,Prog2). 
prove_aux(Atom(*\pc{\textbf{:DT:GT}}*),Prog1,Prog2):-
  metarule(Name,Subs,(Atom(*\pc{\textbf{:DT:-BodyDT)}}*)),
  (*\pc{\textbf{metarule(Name,Subs,(Atom:GT:-BodyGT)),}}*)
  (*\pc{\textbf{combine\_types(BodyDT,BodyGT,Body),}}*)
  prove(Body,[sub(Name,(*\pc{\textbf{GT,}}*)Subs)|Prog1],Prog2).
\end{lstlisting}
\end{tabular}
\end{center}
\caption{The \metast{} algorithm.}
\label{fig:metast}
\end{figure}

We have as main invariant of the algorithm that every 
goal provided as the first argument to \pc{prove\_aux} has its derivation
type \pc{DT} instantiated to the most specific type that is consistent
with how the atom was derived from the examples (using the program built up at 
to that point), and that \pc{GT} is the most general type of 
the atom as is constrained by the typing of the background predicates and
the invented clauses in the program. An atom's derivation type is always
an instance of its general type, as can be easily checked.

\subsection{Forward Propagating Derivation Types}

As part of specifying what problem the user would like to solve, the 
user supplies a type, \pc{Type}, for the examples.
In the \pc{learn} clause of \metast{},
the \pc{Pos} and \pc{Neg} examples are of the form \pc{p(a,\ldots,z)},
just as in the untyped case. 
The first two lines of the new \pc{learn} body make sure that the example
atoms satisfy the above invariant, namely each example is mapped from 
\pc{p(a,\ldots,z)} to \pc{p(a,\ldots,z):Type:GT}, where \pc{Type} is the
user supplied type, and \pc{GT} is a new entirely unconstrained type variable.

The \pc{prove} clauses of \metast{} are taken unchanged from the untyped algorithm.
Their main purpose is to implement (leftmost) depth-first search.

For this subsection we will focus on how the derivation type \pc{DT} is used
to prune parts of the search space and how the correct derivation type is
assigned to new goals. For now it suffices to know that the difference between 
the body atoms \pc{BodyDT}, the body atoms with derivation type, 
and the body atoms \pc{Body} in the \pc{prove\_aux}
clauses in \ref{fig:metast} is that \pc{Body}'s atoms additionally have their
general type set.

In discussing the \pc{prove\_aux} clause disjuncts we maintain the invariant
that the goal atoms, i.e~the first argument to the clause and the \pc{Body}
atoms, 
have the derivation type \pc{DT} instantiated to the most accurate known type.

\paragraph{Disjuncts}
In the first disjunct of \pc{prove\_aux} we check whether the atom can be proved
by one of the primitive background predicates. An atom's predicate will now
be matched against primitives, not only on arity, but also on the \pc{DT} type.
A primitive can only be chosen when the derivation type is an instance of the 
primitive's type. For example, take the goal \pc{P([1,2],B)} with
derivation type \pc{[list(int),int]}. When considering the \pc{tail:[list(X),list(X)]}
predicate, unification of the types fails due to \pc{int} and \pc{list(X)}
not being unifiable. A predicate such as \pc{head:[list(X),X]} does pass
the unification test due to \pc{X=int} equalizing the types.
Hence accurate derivation type checking occurs at the \pc{prim(Atom:DT)} line,
and failure to unify the types means that Prolog's evaluation will not be invoked
for the atom.

The second disjunct matches against the type annotated interpreted background
clauses. The assertion \pc{interpreted(Atom:DT:-BodyDT)} says to find a 
background clause definition such that the head of the clause successfully
unifies both the \pc{Atom}, containing a predicate name and values, as well
as the derivation type \pc{DT} with the clause's head and type. When the head unifies
successfully, the definition of the interpreted clause makes sure
that types of the body atoms are appropriately unified as well. 
Upon successful selection of a background predicate, the general type
of each atom is appended to their derivation type.

The third disjunct tries to reuse an invented clause of the program.
Here we use that the polymorphic type of (the head of) the clause is saved
along with the meta-substitution and the name of the metarule used for the invention.
The \pc{instance\_of(DT,GTinv)} goal asserts that the derivation type is
an instance of \pc{GTinv}. Note that \pc{DT} is not unified with \pc{GTinv} 
itself, which would make the type to specific, instead \pc{DT} is unified
with a copy of \pc{GTinv}.
We create a new instance of the invented clause, making sure that
the same meta-substitutions constraints are imposed,
and generate body atoms with the appropriate derivation types. 

The final disjunct deals with the option of inventing a new clause. The only major
change to note: in unifying the head of the metarule
with the atom we need to prove, the derivation type (and the general 
polymorphic type) are unified as well.

\subsection{Inferring the Most General Type}

Having looked at how goals get the correct derivation type assigned,
we now look at how the general type can be determined for atoms.
Note that determining the general type serves two purposes. First, we need the 
general type of an invention if we are to reuse an invention wherever we can,
e.g.~an invention \pc{inv(A,B):-tail(A,C),head(C,B)} derived to prove
\pc{P([1,2],2)} has derivation type \pc{[list(int),int]} when invented
for these values.
Instead we want to know the more general polymorphic type,
\pc{[list(X),X]}, such that the invention can be reused, e.g.~to
prove \pc{P([[],[1]],[1]):
[list(list(int)),list(int)]}.
Second, which follows directly from knowing the general type of inventions,
is that we can give a general type to the entire program, i.e.~to the head
of the clause used for the examples. This means we can also generalize from
the example type that we have been given.

All of the background knowledge (primitive and interpreted predicates, and
metarules) is already annotated with general types (and become more accurate
for goal derivation types by making them more specific through unification).
This is the property exploited in the algorithm, and why we see unification
on the background knowledge twice, once for the derivation types and once
for the general types.

\paragraph{Disjuncts revisited}
For the first disjunct it should be clear that unifying \pc{GT} with the
general type of the already selected primitive keeps track of the general
type of the atom. For the interpreted clauses of the second disjunct this holds
true as well, except that body atoms of the interpreted clause become goals
which can restrict the general type during their proof. We have that
background knowledge is only annotated with a single type,
which is why we unify twice, once we get body atoms with derivation types and
once we get body atoms with general types. 
The \pc{combine\_types} predicate combines these lists of singly typed
atoms, of the form \pc{BodyAtom:DTy} and \pc{BodyAtom:GTy}, into atoms with both
a derivation and general type, e.g.~\pc{BodyAtom:DTy:GTy}.

For the third disjunct we already explained how the general type of an invention
is used to type check. To keep track of the general type, note that the general 
type of the (head of the) invention must be the general type of the atom,
which is asserted by the equality.
This equality also makes sure that the general type among all usages of the 
invention remain consistent for this one general type.
The metarule instantiation with the general type \pc{GT} makes
sure constraints imposed by a new goal are directly reflected in the general
type of the invention.

The final disjunct, handling new inventions, is similar in reasoning to
the previous disjunct. The one thing to note is that because the new
invention is saved as part of the program, the general type for the
clause is stored with it. This is the general polymorphic type that will be shared
amongst all uses of the invention\footnote{An alternative 
simple approach would be to modify the definition of meta-subtitutions
to include types on the predicate names, forgoing the need to store the type
separately.}.

\section{Theoretical Results}
\label{sec:simple_theory}

To argue the correctness of the \metast{} algorithm we establish soundness,
i.e.~the programs found by the algorithm are correct for the examples,
and relative completeness, that is, every program found by the \metaai{} 
algorithm will also be found by \metast{}. Related to the completeness
result, we also briefly look at how sound pruning impacts predictive accuracy.
As a separate result we characterize how types on predicates can bound the 
size of the search space.

\subsection{Soundness}

\begin{definition}
An inductive synthesis algorithm is \emph{sound} if 
every program returned by the algorithm is a 
consistent hypothesis (definition \ref{def:untyped-cons-hypo}).
\end{definition}

To establish soundness of \metast{}, we make the following assumption:
the \metaai{} algorithm is sound.
Though we assume it here, it is not too hard to be convinced that this
assumption is true. In essence the meta-interpreter extends the proof procedure
of SLD-resolution with additional higher-order rules, and at the same time
maintains the proof steps in the derivation of the entailment of the examples
that an implementation of SLD-resolution does not itself maintain.

\begin{proposition}
Given that the \metaai{} algorithm is sound, the \metast{} algorithm 
is sound.
\end{proposition}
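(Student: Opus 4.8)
The plan is to exhibit a type-forgetting projection from successful \metast{} runs onto successful \metaai{} runs, and then to argue that the consistency criterion of Definition \ref{def:untyped-cons-hypo} is insensitive to this projection. The first step is the structural observation that, line for line, the \metast{} code of Figure \ref{fig:metast} is exactly the \metaai{} code of Figure \ref{fig:metaai} with extra goals inserted into the bodies of the \pc{prove\_aux} clauses, namely \pc{prim(Atom:DT)}, \pc{prim(Atom:GT)}, \pc{instance\_of(DT,GTinv)}, \pc{GT=GTinv}, the second \pc{metarule}/\pc{interpreted} lookups, and the \pc{combine\_types} calls. Since each of these is conjoined to the pre-existing body, it can only cause additional failures and backtracking; it never opens a success branch that was previously closed. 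Pruning of this kind can damage completeness but never soundness.

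The key fact to make precise is that the type arguments \pc{DT} and \pc{GT} are syntactically disjoint from the term arguments of every \pc{Atom}. I would argue this from the algorithm's stated invariant: every goal reaching \pc{prove\_aux} has the shape \pc{Atom:DT:GT}, where \pc{DT} and \pc{GT} are type expressions built from the grammar of Section \ref{sec:typed-type-def}, while \pc{Atom} is a (higher-order) atom over $\consts$, $\predsig$ and the logical variables in $\fovars \cup \hovars$. As these two syntactic categories share no variables, any unification performed on the type components leaves the bindings of the underlying program terms untouched. Consequently, erasing the \pc{:DT:GT} annotations from every goal, metarule, and stored meta-substitution (replacing each \pc{sub(Name, GT, Subs)} by \pc{sub(Name, Subs)}) turns a \metast{} derivation into a valid \metaai{} derivation on identical term bindings, so the constructed \pc{Prog} is, after erasure, exactly a program that \metaai{} could have returned.

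The conclusion then follows quickly. Suppose \metast{} returns a typed program $H$ on input $(\kb, \exs)$, and let $H^-$ be its type-erasure. By the projection argument $H^-$ is returnable by \metaai{} on the corresponding untyped input, so by the assumed soundness of \metaai{} we have $H^- \cup \kb \entails \exs$, i.e.~$H^-$ entails every positive example and no negative one. Finally, the entailment relation $\entails$ — and hence consistency in the sense of Definition \ref{def:untyped-cons-hypo} — is defined purely over the untyped logical semantics of the clauses; the simple type annotations on $H$ play no role in SLD-resolution and therefore do not alter which atoms are entailed. Thus $H$ is a consistent hypothesis exactly when $H^-$ is, giving $H \cup \kb \entails \exs$ and the soundness of \metast{}.

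I expect the main obstacle to be discharging the disjointness claim rigorously for the goals that actually perform type unification, namely \pc{instance\_of(DT,GTinv)} and \pc{GT=GTinv}, since in principle a careless implementation could let these instantiate variables shared with the term-level search. The delicate point is the freshness/copying discipline already flagged in the text — that \pc{DT} is unified with a \emph{copy} of \pc{GTinv} rather than with \pc{GTinv} itself — which must be verified to guarantee that type-level unification neither constrains nor is constrained by the term-level derivation in a way that would let \metast{} accept a program \metaai{} rejects.
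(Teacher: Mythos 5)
Your proof is correct and follows essentially the same route as the paper's: the paper likewise argues that \metast{} only adds constraints to \metaai{}, so that any successful \metast{} derivation is, once the types are dropped, exactly a \metaai{} derivation, whence soundness transfers from the assumed soundness of \metaai{}. The one difference is degree of rigor rather than substance — where you make the type-erasure projection and the disjointness of type-level and term-level variables explicit (including the copying discipline in \pc{instance\_of}), the paper disposes of the same concern with an informal closing remark that the modified head and body atoms ``do not matter'' because success still requires the untyped atoms to form a proper \metaai{} proof.
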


\begin{proof}
Assume the precondition.

Note that in logic/constraint programming we have that adding more 
constraints to clauses can only reduce the number of solutions found.
For the \metast{} algorithm to succeed the same derivation that the 
\metaai{} algorithm establishes must be found, i.e.~the derivation on
atoms without their types is exactly a \metaai{} derivation. This
is due to the \metast{} algorithm only adding constraints to the 
\metaai{} algorithm.

Clearly the type checking of \metast{} only imposes additional 
conditions on the proof of atoms, hence the derivations found by \metast{}
must be a subset of the derivations found by \metaai{}. Conclude
that any returned program by \metaai{} must as well be a consistent hypothesis.
\end{proof}

The only thing to remark with regard to the additional constraints imposed by
\metast{} is that they are not as simple as just additional atoms in bodies.
The head atoms gain some freedom in that they have additional variables in
their types, and some of the body atoms in the clauses are modified instead
of just added. These changes do not matter, as in the end the algorithm 
can only succeed when the atoms (without their types) form a proper \metaai{}
proof.



\subsection{Completeness}

For our completeness result we assume that the background knowledge is 
typable by the polymorphic types introduced in section \ref{sec:typed-type-def}.
We also assume that the provided background knowledge has correct types, 
i.e.~the types might be too general, but they are not inaccurate.
For completeness relative to the \metaai{} algorithm we show that when the
\metaai{} algorithm is able to find a program, then \metast{} must find the
same program (syntactically identical).

Remember, from section \ref{sec:metaai-algo}, that the search procedure of 
the \metaai{} (and hence of the \metast{}) algorithm is leftmost depth-first.

\begin{definition}
A procedure that discards parts of a search space performs
\emph{inconsistency pruning} when
the parts of the search space pruned away never contain any successful nodes.
\end{definition}

\begin{proposition}
Let $A$ be a depth-first search algorithm.
If an algorithm $A'$ is algorithm $A$ except that it does additional 
inconsistency pruning,
then when algorithm $A$ finds its first successful node $s$, algorithm $A'$
will also find node $s$ as its first successful node.
\end{proposition}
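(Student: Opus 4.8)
The plan is to model both algorithms as preorder depth-first traversals of one and the same search tree $T$, whose nodes (partial derivations) are either \emph{successful} or not, and to exploit that $A'$ traverses $T$ with exactly the same branching order as $A$ but simply refuses to descend into any subtree it flags as inconsistent. By the definition of inconsistency pruning, every flagged subtree contains no successful node, and this is the one fact the whole argument rests on.

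First I would fix notation: let $n_1, n_2, \ldots$ be the sequence of nodes visited by $A$ in its fixed DFS preorder, and let $s = n_k$ be the first successful node, so $n_1, \ldots, n_{k-1}$ are all unsuccessful. I would then isolate two elementary facts. The first is a \emph{no-pruning-of-ancestors} observation: every node on the root-to-$s$ path, $s$ included, has $s$ as a descendant (or equals $s$), so its subtree contains a successful node, and hence by the definition of inconsistency pruning it can never be pruned by $A'$. The second is an \emph{order-preservation} observation: since $A'$ makes the identical branching choices as $A$ and only ever omits whole subtrees, the set of nodes $A'$ visits is a subset of those $A$ visits, listed in the same relative order.

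Next I would assemble the two halves of the conclusion. For reachability, the ancestor observation guarantees $A'$ is never blocked before $s$: the DFS can descend the root-to-$s$ path without any ancestor being pruned, so $s$ is among the nodes $A'$ visits. For correctness, suppose $A'$ met some successful node $t$ before $s$; since $A'$'s visited nodes form an order-preserving subset of $A$'s, $A$ would also visit $t$ before $s$, contradicting that $s$ is $A$'s first successful node. (And the subtrees $A'$ prunes along the way contribute nothing, being free of successful nodes by definition.) Hence $A'$ meets no successful node before $s$, so $s$ is precisely its first successful node.

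The step I expect to require the most care is making the order-preservation observation rigorous: one must argue that skipping a pruned subtree never reorders the surviving visits nor lets $A'$ wander onto a branch $A$ never took. Formally this is an induction on the DFS showing that $A'$'s visit sequence is obtained from $A$'s by deleting exactly the node-intervals corresponding to pruned subtrees, with the remaining nodes retaining their order. Once that is pinned down, the ancestor observation and the final combination are immediate.
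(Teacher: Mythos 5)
Your proposal is correct and follows essentially the same route as the paper's proof, which likewise argues that sound pruning cannot remove the part of the space containing $s$ (so $A'$ eventually reaches it) and that the traversal order is preserved modulo the deleted unsuccessful subtrees (so no earlier success can appear). Your version merely makes explicit two points the paper leaves implicit --- the ancestor observation that no node on the root-to-$s$ path can be pruned, and the induction establishing that $A'$'s visit sequence is an order-preserving subsequence of $A$'s --- which is a welcome tightening rather than a different argument.
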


\begin{proof}
Assume the stated relationship between $A$ and $A'$ and that algorithm
$A$ has found its first successful node $s$.
Suppose that either $A'$ does not find a successful node, or finds a node $t$.
Due to sound pruning we have that $A'$ cannot have pruned away the part
of the space containing $s$, and must come across it eventually.
Hence $A'$ must find node $t$. Because the traversal order of the nodes
is the same (modulo unsuccessful nodes being left out) it follows that $t = s$.
\end{proof}

First note that a program with a type check error cannot be a consistent
hypothesis (given at least one positive example).
That \metast{} adds inconsistency pruning relative to \metaai{} follows from 
that a type checking failure leads to pruning
and that any program with a type check error cannot be made to type check
by adding additional clauses to the program (i.e.~continuing the search).

That \metast{}'s type checking through unification correctly implements 
type checking can be established by induction on the size of the derivation
tree constructed during the algorithm\footnote{Note that we plan to make
the presentation of the type system formal such that this proof, amongst
others, can be proven formally.}.
The induction hypothesis is that the values are always inhabitants
of the derivation types. When the algorithm tries to prove a goal
with a primitive/interpreted/invented predicate and the typing of the predicate
cannot be instantiated to the derivation type of goal,
that predicate will be rejected due to a type error. 
This rejection is sound due to the predicate's type
indicating that the predicate cannot successfully the values of this type.
Otherwise, the predicate may be used to continue the search, at which
point it is easy to show that the values in the new goals are again
inhabitants of their type.

Using the above result we can immediately conclude the below proposition, by
instantiating the algorithms $A$ and $A'$ by \metaai{} and 
\metast{}, respectively.

\begin{proposition}
The (simply typable) programs found by the \metaai{} algorithm and the
\metast{} algorithm are exactly the same syntactically.
\end{proposition}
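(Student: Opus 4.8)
The plan is to obtain this proposition as an immediate corollary of the depth-first inconsistency-pruning proposition established above, instantiating $A$ by \metaai{} and $A'$ by \metast{}. To invoke that proposition I must discharge its two hypotheses: that $A'$ is literally $A$ together with additional \emph{inconsistency pruning}, and that both algorithms run the same depth-first search. The pruning half is essentially already in hand. A program containing a type-check error cannot be a consistent hypothesis given a positive example, and, crucially, no further clause addition can repair a type error, so every subtree that \metast{} discards via a failed type unification is free of successful nodes. Combined with the induction already sketched, which shows that unification-based type checking faithfully implements type checking, this establishes that the only nodes \metast{} omits relative to \metaai{} are ones that could never have yielded a consistent hypothesis.

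The main obstacle is the remaining hypothesis: that \metast{} performs \emph{exactly} the \metaai{} search modulo this pruning, with no reordering and no extra branching. Here I would argue that the \emph{untyped projection} of a \metast{} computation, obtained by erasing every \pc{DT} and \pc{GT} annotation together with the added type goals, is literally the corresponding \metaai{} computation. The disjuncts of \pc{prove\_aux} remain in the same order and stand in one-to-one correspondence; each new goal (\pc{prim(Atom:DT)}, \pc{instance\_of(DT,GTinv)}, the general-type unifications, and \pc{combine\_types}) is \emph{deterministic} given the underlying untyped choice, so it introduces no fresh choice point and can only cause the surrounding clause to fail early. The subtlety flagged after the soundness proof, namely that some body atoms are modified rather than merely appended and that head atoms acquire extra type variables, must be handled by noting that these modifications reside entirely in the type components and vanish under the projection, while the values carried by the atoms are governed by the same unifications as in \metaai{}. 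Thus the traversal order of the underlying untyped nodes is preserved exactly, and the hypothesis of the pruning proposition is met.

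Having discharged both hypotheses, the depth-first proposition yields that \metaai{} and \metast{} share the same first successful node. It then remains to lift ``first successful node'' to ``program found,'' which is where I would spend a little care. Since a MIL learner returns the first consistent hypothesis and the implementation attains optimality through iterative deepening on the invention bound, I would apply the proposition at each bound $k$ in turn: because pruned nodes are never successful, the least $k$ admitting a solution is identical for both algorithms, and at that $k$ their first successful nodes coincide. The synthesized program is read off the meta-substitutions of that node, which record only (untyped) metarule names and predicate-symbol substitutions; the type annotations stored alongside them play no role in the program's syntax. Hence the two algorithms return syntactically identical programs, as required.
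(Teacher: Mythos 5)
Your proposal is correct and takes essentially the same route as the paper: both obtain the result by instantiating the depth-first inconsistency-pruning proposition with $A$ as \metaai{} and $A'$ as \metast{}, discharging the pruning hypothesis via the unrecoverability of type errors and the induction showing that unification faithfully implements type checking. The two points you elaborate beyond the paper --- the untyped-projection argument that \metast{} adds only deterministic goals and hence no new choice points, and the per-invention-bound application to handle iterative deepening --- merely make explicit steps the paper leaves informal, strengthening rather than altering the argument.
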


We will use this result in justifying why it is not worthwhile to experimentally
investigate the difference in accuracy of \metast{} and \metaai{}.
The above result only holds when the algorithms are allowed to run arbitrarily
long, long enough to find the (first) successful program. 
In practice a timeout is used. As we will see in the experimental work,
the untyped system is considerably slower than the typed system.
A consequence is that the typed system can find programs for which the
untyped system will use more time than is allowed by a timeout.
In such cases the predictive accuracy of the typed system will vastly
outperform the untyped system. In experiments where both systems are able to 
find the program the predictive accuracy is not impacted by type checking.

\subsection{Proportion of Relevant Predicates}
For the remainder of the theoretical examination we focus on a particular
language class of logic programs. The metarules in table \ref{tab:metarules}, e.g.~containing the
Chain rule $P(A,B) \ot Q(A,C),R(C,B)$, focus on restricting the structure
of the class of programs where each clause has a head atom and two body atoms.
Each atom's predicate has exactly arity two. This class is known as $H_2^2$.

The number of programs in this class is given as $O(|M|^n(p^3)^n)$, where
$|M|$ is the number of metarules, $n$ is the number of clauses and $p$ is the
number of predicate symbols \citep{lin2014bias}. The $p^3$ term is due
to that all three of the predicates in a clause being chosen independently from
each other.
If higher-order abstractions, each with $k \geq 1$ higher-order variables are
considered, this result is updated to $O(|M|^np^{(2+k)n})$ \citep{Cropper2016}.

We will suppose that the predicates in the background
knowledge are annotated with polymorphic types.
The improvement of taking types into consideration is due to predicate typing
having to match up, meaning that predicates in a clause usually cannot
be chosen independently from each other. 
Only a portion of the background predicates' types will line up. 

Given $p$ predicates with types, fixing one of the three predicates
in a $\Htwo$ clause restricts the choice of the two other predicates of the clause.
It hence makes sense to consider the maximum number of predicates that remain as
possible choices for any predicate of a $\Htwo$ clause,
after either (or both) of the other predicates have been selected. 
This is determined per instance of the background knowledge.

Let $\overline{p} \leq p$ be the worst case number of predicates that remain as choices for 
any of the predicates in a $\Htwo$ clause.
This value is determined by an exhaustive search
over the three predicate places in a clause, filling in any one predicate
and checking how many of the predicates can still be substituted for the
remaining predicate variables.
\begin{definition}
Given $p$ typed predicates, 
with $\overline{p} \leq p$ an upper bound on the number of typed
predicates that can be filled in any of the predicate variables of a $\Htwo$ clause,
given that another predicate of the clause has already been selected,
$t = \overline{p} /p$ is the \emph{worst case proportional constant}.
\end{definition}

The ratio will always be between $0$ and $1$, with lower values indicating
a greater reduction in the search space.
The proportional constant is a convenient value to work with due to it 
abstracting away the number of predicates in the program. The following
result characterizes the reduction in the hypothesis space of programs given
that predicates are properly annotated with types.

\begin{proposition}
\label{prop:space_red}
Given $p$ typed predicates, and the worst case proportional constant $t$ for the 
predicates, the hypothesis space $\Htwo$ is reduced by a factor of $t^{3n}$,
where $n$ is the number of clauses, versus the untyped hypothesis space.
\end{proposition}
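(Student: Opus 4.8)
The plan is to start from the untyped count of the $\Htwo$ hypothesis space recorded just above, $O(|M|^n p^{3n})$, and to show that imposing type-consistency shrinks the per-clause predicate count from $p^3$ to $(tp)^3 = t^3 p^3$, so that over $n$ clauses the whole space contracts by a factor $t^{3n}$. The first fact I would isolate is the structural one already stated in the excerpt: the factor $p^3$ arises precisely because the three predicate symbols of a $\Htwo$ clause (the head and the two body atoms) are chosen independently, each from the $p$ available predicates, and the metarule factor $|M|^n$ is orthogonal to the choice of predicates.

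Next I would make the type-consistency constraint explicit for a single clause. Fixing a metarule, e.g.\ Chain $P(A,B)\ot Q(A,C),R(C,B)$, forces the three predicate types to share type variables along the shared first-order variables: with $P:[X,Y]$, $Q:[X,Z]$, $R:[Z,Y]$, an admissible assignment must make the three selected predicates' declared types simultaneously unifiable. I would then invoke the definition of $\overline{p}$ directly: once one predicate position of the clause is pinned down, at most $\overline{p}=tp$ of the $p$ predicates remain type-compatible for any other position. The counting step is to bound the number of type-consistent triples per clause and raise this to the $n$-th power over the independent clause slots, leaving $|M|^n$ untouched, to obtain a typed space of $O\!\left(|M|^n(\overline{p})^{3n}\right)=O\!\left(|M|^n(t p)^{3n}\right)=t^{3n}\cdot O(|M|^n p^{3n})$.

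The main obstacle — the step I would treat most carefully — is justifying that \emph{each} of the three positions contributes a factor $t$, rather than only the two that are genuinely constrained by an already-chosen predicate. The naive argument picks one predicate freely ($p$ options) and bounds the remaining two by $\overline{p}$ each, which yields only $p\cdot\overline{p}^{\,2}=t^2 p^3$, a reduction of $t^{2n}$; indeed one can build type signatures (e.g.\ predicates split into two equal-size groups whose types never unify across groups) where the genuine per-clause reduction is exactly $t^2$, so $t^{3n}$ cannot hold as a worst-case upper bound without a further hypothesis. To recover the full cubic factor I would additionally treat the head position as type-constrained, observing that in the search the head's derivation type is already fixed by the goal it is invoked to prove — this is the forward propagation of derivation types established earlier in this chapter — so the head predicate too ranges over at most $\overline{p}$ type-compatible choices. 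With all three positions bounded by $\overline{p}$ the per-clause count is $\overline{p}^{\,3}=t^3 p^3$, giving the claimed $t^{3n}$ reduction. I would flag explicitly that the cubic figure is the reduction expressed in terms of the worst-case-parameter $\overline{p}$ under this context assumption, and that for arbitrary structured signatures without a fixed head type the defensible reduction is only $t^{2n}$.
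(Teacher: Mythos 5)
Your counting argument is, at its core, the same as the paper's: the paper's entire proof consists of substituting $\overline{p} = tp$ for $p$ in the bound $O(|M|^n(p^3)^n)$ and performing exactly the algebra you give, with the metarule factor $|M|^n$ left untouched. Where you go beyond the paper is in your final paragraph, and your concern there is well founded. The paper defines $\overline{p}$ conditionally --- it bounds the choices for a predicate variable \emph{given that another predicate of the clause has already been selected} --- and then silently applies that bound to all three slots of a $\Htwo$ clause, including the first, unconditioned one. As you observe, the definition as stated only licenses $p\cdot\overline{p}^{\,2} = t^2p^3$ consistent triples per clause, hence a $t^{2n}$ reduction; and your two-group signature is a genuine witness: with two mutually incompatible groups of $p/2$ pairwise-compatible predicates one has $t = 1/2$ but $2(p/2)^3 = t^2p^3$ consistent triples, strictly more than $\overline{p}^{\,3} = t^3p^3$, so $t^{3n}$ fails as a worst-case guarantee under these definitions. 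The paper offers no repair for this; your patch --- treating the head's type as already fixed by the goal it is invoked to prove, via forward propagation of derivation types --- is your own addition, not the paper's, and note that even it requires the further (plausible but unstated) assumption that conditioning on a goal type constrains the head slot at least as tightly as conditioning on a chosen predicate does, which is what the definition of $\overline{p}$ actually quantifies. Your addition is, however, in the spirit of the paper's own later remark (in the ``Additional Analysis'' of its first experiment) that the head slot is in fact restricted to the example predicate and invented symbols, i.e., that the $p^3$ model overcounts what MIL actually searches. In short: your derivation reproduces the paper's proof, and your caveat identifies a real gap in it --- the honest reading of the proposition is as the informal estimate obtained by substituting $\overline{p}$ uniformly into all three positions, with $t^{2n}$ being the reduction that the stated definitions rigorously support.
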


The $p$ term in $O(|M|^n(p^3)^n)$ is the (maximum) number of predicates that can 
be filled in for any predicate variables in a untyped (unabstracted) clause.
For the typed case we know that this maximum is $\overline{p}$, and hence we substitute
$\overline{p}$ for $p$ in the size bound:
\begin{align*}
    |M|^n(\overline{p}^3)^n
    &=
    |M|^n((tp)^3)^n
\\  &=
    |M|^n(t^{3n})(p^{3n})
\\  &=
    (t^{3n})(|M|^n(p^3)^n))
\\
\end{align*}

In case of the abstracted search space we have that the reduction factor
is $t^{(2+k)n}$, using the exact same reasoning. These results imply
that using types to prune the search space leads to a considerable reduction 
in effort.


\section{Experimental Results}
\label{sec:simple-expr}

There are two main concerns when evaluating synthesis systems: the speed with
which they are able to find programs and, in an inductive setting,
the predictive accuracy of the found programs for the relation that is being learned.

As shown in the previous section, when the search algorithm is (leftmost) 
depth-first search, the first encountered program that correctly entails the examples 
is always the same.
It follows that the accuracy of the found programs (barring timeouts, which
we will not consider in this section) is not affected by
polymorphic type checking based pruning, justifying the decision of only performing
experiments that check the impact on the size of the search space and 
the impact on the time needed for synthesis.

We perform three experiments to evaluate the benefit of polymorphic 
types to MIL. First, the \emph{Search Space Reduction} experiment checks that
the inclusion of irrelevant background predicates has negative effects for
the untyped framework, though the typed system is able to ignore them.
Second, the \emph{Ratio Influence} checks that the implementation is able to come close
to the theoretical result regarding better than linear influence of 
the ratio on the search space. 
Finally, we do a statistical experiment, \emph{Simply Typed Droplasts}, 
on the synthesis of the $droplasts$ program, checking for a time speedup and
a reduction in the number of derivation steps.

For simplicity's sake we will compare \metast{} to the untyped system by just
disabling \metast{}'s type checking\footnote{Time constraints caused us to
    decide that proper experiments generating both typed and untyped experiments
will have to be deferred to future work.}.
Note that this means that there is
some additional overhead versus \metaai{}, which does not keep track of types
at all. The number of proof steps is not impacted by this overhead, only the 
time needed is (though the impact should be quite small).

The Prolog implementation used for running the experiments is SWI-Prolog.

\subsection{Derivation steps}

In determining how efficient program synthesis is, we propose to keep track
of the number of decisions made for traversing the program search space.
That is, a decision is a choice made to construct part of a potential program.
A decision in the MIL framework corresponds to 
\begin{itemize}
\item trying to prove an atom with a primitive predicate,
\item trying to prove an interpreted higher-order predicate, 
by expanding the body,
\item trying an existing invented predicate/clause,
\item or choosing to apply a metarule, creating an invention. 
\end{itemize}

Figure \ref{fig:counter} indicates with bold lines where in the 
\metast{} algorithm the (global) decision counter is increased. 
Note, that for our purposes, a decision is made 
\emph{after} type-checking. For the untyped algorithm the checks occur at the
same place (modulo additional type checking lines).

Due to the correspondence of synthesis with constructing proofs, 
in particular derivations for the positive examples, the decisions will also 
be called proof steps or derivation steps.

\begin{figure}
\begin{center}
\begin{tabular}{c}
\begin{lstlisting}
prove_aux(Atom:DT:GT,Prog,Prog):-
  prim(Atom:DT),!,
  prim(Atom:GT),
  (*\texttt{\textbf{increase\_counter,}}*)
  call(Atom).
prove_aux(Atom:DT:GT,Prog1,Prog2):-
  interpreted(Atom:DT:-BodyDT),
  interpreted(Atom:GT:-BodyGT),
  combine_types(BodyDT,BodyGT,Body),
  (*\texttt{\textbf{increase\_counter,}}*)
  prove(Body,Prog1,Prog2).
prove_aux(Atom:DT:GT,Prog1,Prog2):-
  member(sub(Name,GTinv,Subs),Prog1), 
  instance_of(DT,GTinv),
  GT=GTinv,
  metarule(Name,Subs,(Atom:DT:-BodyDT)),
  metarule(Name,Subs,(Atom:GT:-BodyGT)),
  combine_types(BodyDT,BodyGT,Body),
  (*\texttt{\textbf{increase\_counter,}}*)
  prove(Body,Prog1,Prog2). 
prove_aux(Atom:DT:GT,Prog1,Prog2):-
  metarule(Name,Subs,(Atom:DT:-BodyDT)),
  metarule(Name,Subs,(Atom:GT:-BodyGT)),
  combine_types(BodyDT,BodyGT,Body),
  (*\texttt{\textbf{increase\_counter,}}*)
  prove(Body,[sub(Name,GT,Subs)|Prog1],Prog2).
\end{lstlisting}
\end{tabular}
\end{center}
\caption{\metast{}'s \texttt{prove\_aux} annotated with decision counter.}
\label{fig:counter}
\end{figure}

\stepcounter{expcount}
\subsection{Experiment \arabic{expcount}: Search Space Reduction}

In this experiment we verify, via deterministic tests, that 1) adding typed
predicates that do not compose w.r.t.~the input type are ignored by the
typed system, and 2) that the difference in time and proof steps 
corresponds to the theoretical results in proposition \ref{prop:space_red}.

\begin{hypothesis}
\label{hypo:mismatched-ignore}
\metast{} with type checking enabled is not able to prune the search space
relative to \metaai{} (i.e.~\metast{} with type checking disabled).
\end{hypothesis}

\begin{hypothesis}
\label{hypo:mismatched-pref}
\metast{} with type checking enabled is not able to learn faster than
\metaai{}.
\end{hypothesis}

\paragraph{Setup}
We ask the system to prove a single positive example \texttt{p(0,1)}, of type
$[int,int]$. In order to strictly control the search space we only provide the
chain metarule. As we want to traverse the entire search space we will only
introduce predicates that cannot contribute to a successful program. 

The predicate(s) in the background knowledge need to be general enough to
always succeed in unifying with goal atom (thereby making sure
that the largest possible search space is traversed).
The following predicate was chosen due to it being well-behaved when
the search space is traversed depth first:
\[ to\_zero(X,0). \]
We add one instance of this predicate with type $[int,int]$, thereby allowing
it to be used by the typed system. To test how the system handles completely
irrelevant background predicates we iteratively add additional instances of 
this predicate, but now with type $[bottom,bottom]$, where $bottom$ is
a dummy type. Predicates with both $bottom$ and $int$ types cannot
occur in the body of the Chain metarule.

\paragraph{Result}
The graph on top in figure \ref{fig:space_red_3clause} shows the number 
proof steps that were needed to traverse the search space, when the number of 
clauses in the largest program considered is restricted to three.
The graph on the bottom in this figure shows the time spent on traversing 
the search space in both the typed and the untyped system.

\begin{figure}[H]
\begin{subfigure}{\textwidth}
\centering
\includegraphics[width=\textwidth]{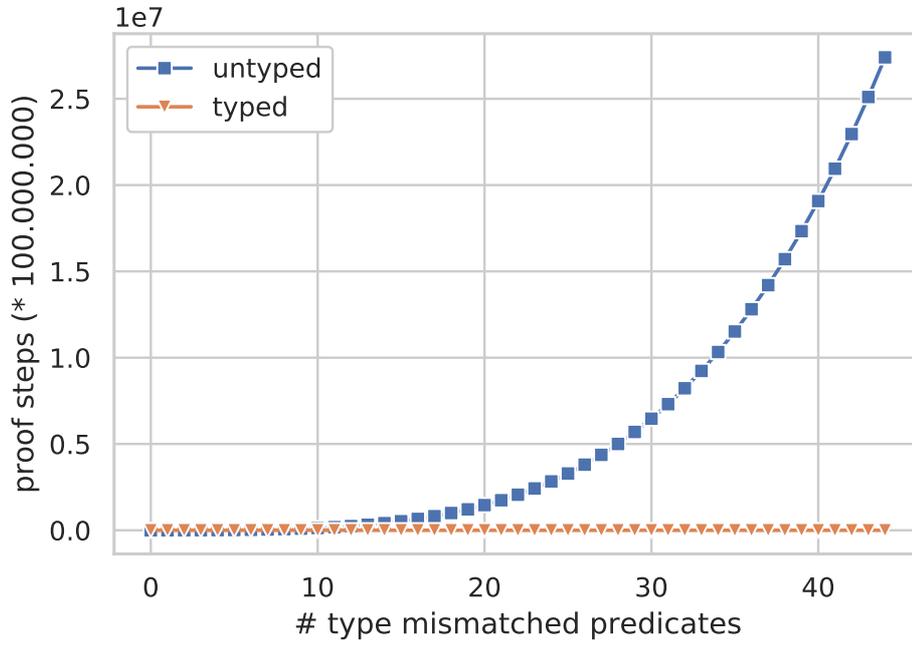}
\end{subfigure}

\begin{subfigure}{\textwidth}
\centering
\includegraphics[width=\textwidth]{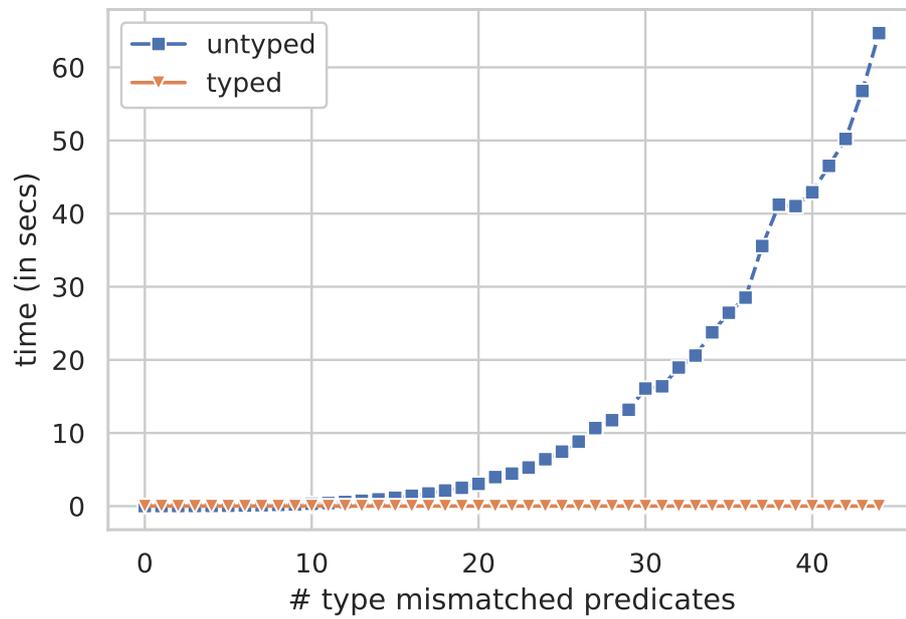}
\end{subfigure}
\caption{Number of proof steps and time for increasing number of mismatched predicates.
Max clauses is three. Typed proof steps are constant at $83$ and the time
required for typed synthesis is (approximately) constant at $0.014$ seconds.}
\label{fig:space_red_3clause}
\end{figure}


The number of steps required by the typed system remain constant at $83$.
Correspondingly, the time for the typed program is also as near to constant,
never needing more than $0.014$ seconds. The untyped program on the other hand
traverses bigger and bigger search spaces with increasing number of (irrelevant)
background predicates. These graphs are strong evidence for rejecting 
hypothesis \ref{hypo:mismatched-ignore}, regarding not being able to further
prune the search space,
and for rejecting hypothesis \ref{hypo:mismatched-pref}, regarding performance
not being affected relative to the untyped system.

\paragraph{Additional Analysis} It is interesting to note that the graph for the proof
steps (and time as well), for a maximum of three clauses, is best explained\footnote{By
    running linear regression and determining that terms of higher degree
always are given trivially sized coefficients} by a polynomial of degree four.
This deviates from the known 
theory result regarding the size of the search space being $O(|M|^n(p^3)^n)$,
where $|M| = 1$, $n = 3$, and $p$ is varied in the experiment. Based on the
theory one would expect a polynomials of degree $9$ to best explain the
experimental data. The experiment was repeated to try to better understand
this observation: the number of clauses was limited to four, which resulted
in a polynomial of degree five best fitting the data, while a degree $12$
polynomial would be expected according to the theory.

Actually it is not hard to see that the theory result does
not take into account certain conditions on the predicates imposed by
the MIL framework. It only represents the size of the search space
traversed by a completely naive brute-force algorithm.
One clear example is that the predicates in the head
are restricted to the predicate of the examples and to invented symbols. 
It follows that at least one of the three predicates in a clause is restricted 
to $n$, instead of $p$. This observation allows us to replace the $p^3$ 
in the size expression with $p^2$ (given $p \gg n$).
This restricted size bound still does not fully explain our lower degree polynomials,
meaning that we need to refine the theory to accurately
capture the hypothesis space that is actually considered by the MIL framework.

\stepcounter{expcount}
\subsection{Experiment \arabic{expcount}: Ratio Influence}

For this experiment we consider the worst case proportional constant 
$t = \overline{p}/p$ as a varying ratio, where $p$ is the total number
of background knowledge predicates, of which at most $\overline{p}$
of which can be used for any predicate variable.
Via a deterministic test we determine the influence of the ratio of matching types
predicates versus all background predicates on the search space that
the \emph{typed} system, \metast{}, explores. Again, we are interested in the 
size of the search space, hence we do not try to find a successful program.

\begin{hypothesis}{Ratio unimportant}
\label{hypo:ratio}
The ratio $t$ does not influence the size of the search space explored.
\end{hypothesis}

\paragraph{Setup}
We take as basis the previous experiment.
Again, we take the $to\_zero(X,0)$ predicate, though this time we insert $25$ of
them in the background knowledge. We vary the types of the predicates
such that different ratios are achieved, e.g.~$1/25$, $2/25$, etc.
The predicates that will be allowed by type checking will have type 
$[int,int]$ and the other predicates will have type $[bottom,bottom]$.
We restrict the number of clauses to three.

\paragraph{Result}

Figure \ref{fig:space-ratio} has a plot showing how the time necessary for
traversing the search space is influenced by the ratio of type matching versus
all predicates. The second plot shows the same behaviour but for the number
of proof steps by the untyped and simply typed systems.

There is enough evidence to \emph{reject} the hypothesis that the ratio of correctly
typed background predicates \emph{does not} matter. Clearly the ratio has significant
influence on the time taken and the number of decisions made by the algorithm.
Note that the untyped system will not care about the ratio of the matching
typed predicates as it will not consider this feature at all, hence leading
to its constant behaviour.

The plots are again best explained by degree four polynomials, again
demonstrating the need for better fitting theory. The time plot is interesting
in particular for that it reveals that the \metast{} with type checking turned
off, which is how this test was conducted for the untyped results, apparently
has overhead from the types that are carried around, even more so than the
typed system, as clearly demonstrated by the case when the ratio is $1.0$.
This is likely due to an increased cost for making placeholder variables
for the types that are not being checked, which appears to have non-trivial 
cost in SWI-Prolog.

\begin{figure}[H]
\begin{subfigure}{\textwidth}
\centering
\includegraphics[width=0.99\textwidth]{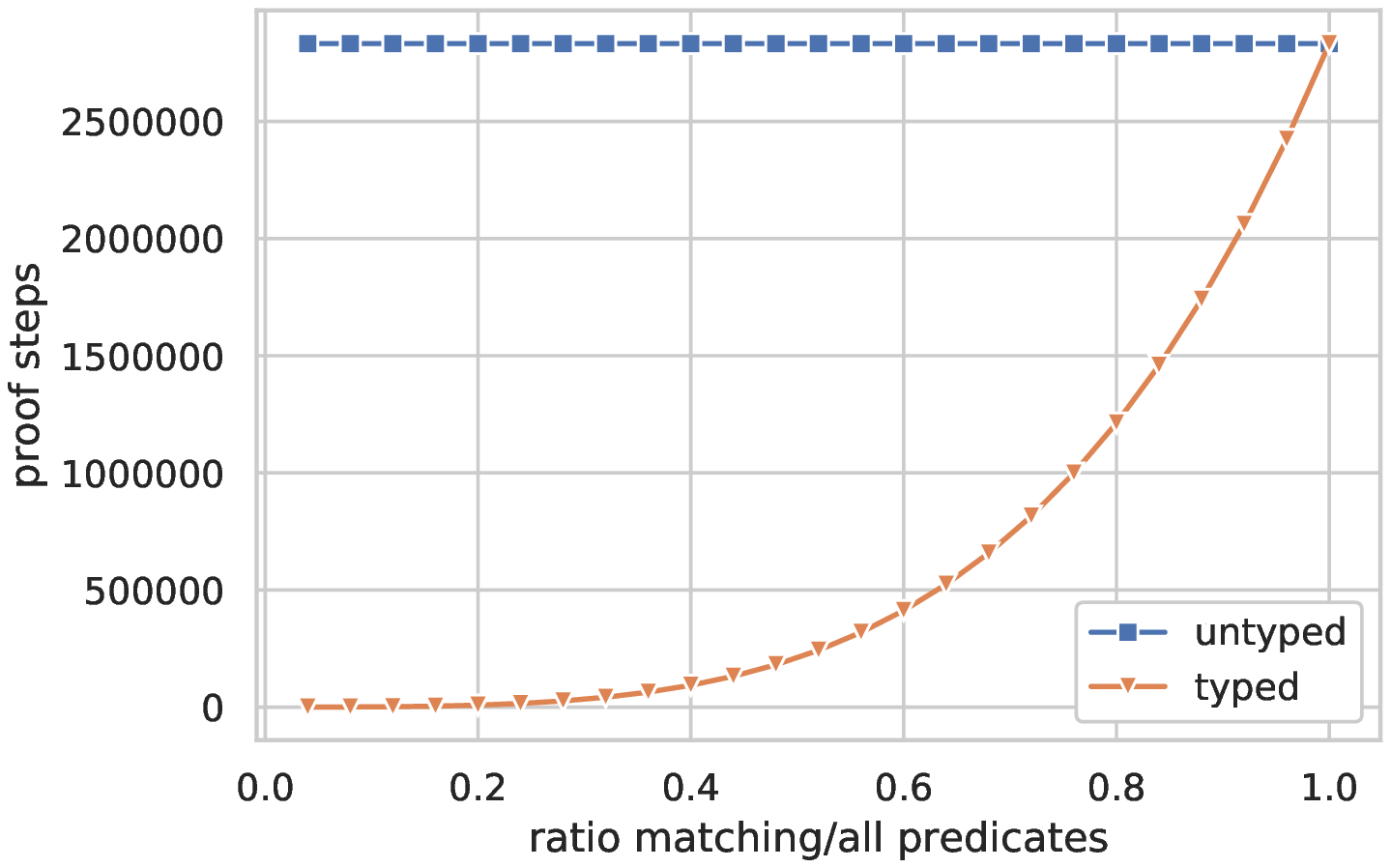}
\end{subfigure}

\hspace{3.2ex}
\begin{subfigure}{\textwidth}
\centering
\includegraphics[width=0.89\textwidth]{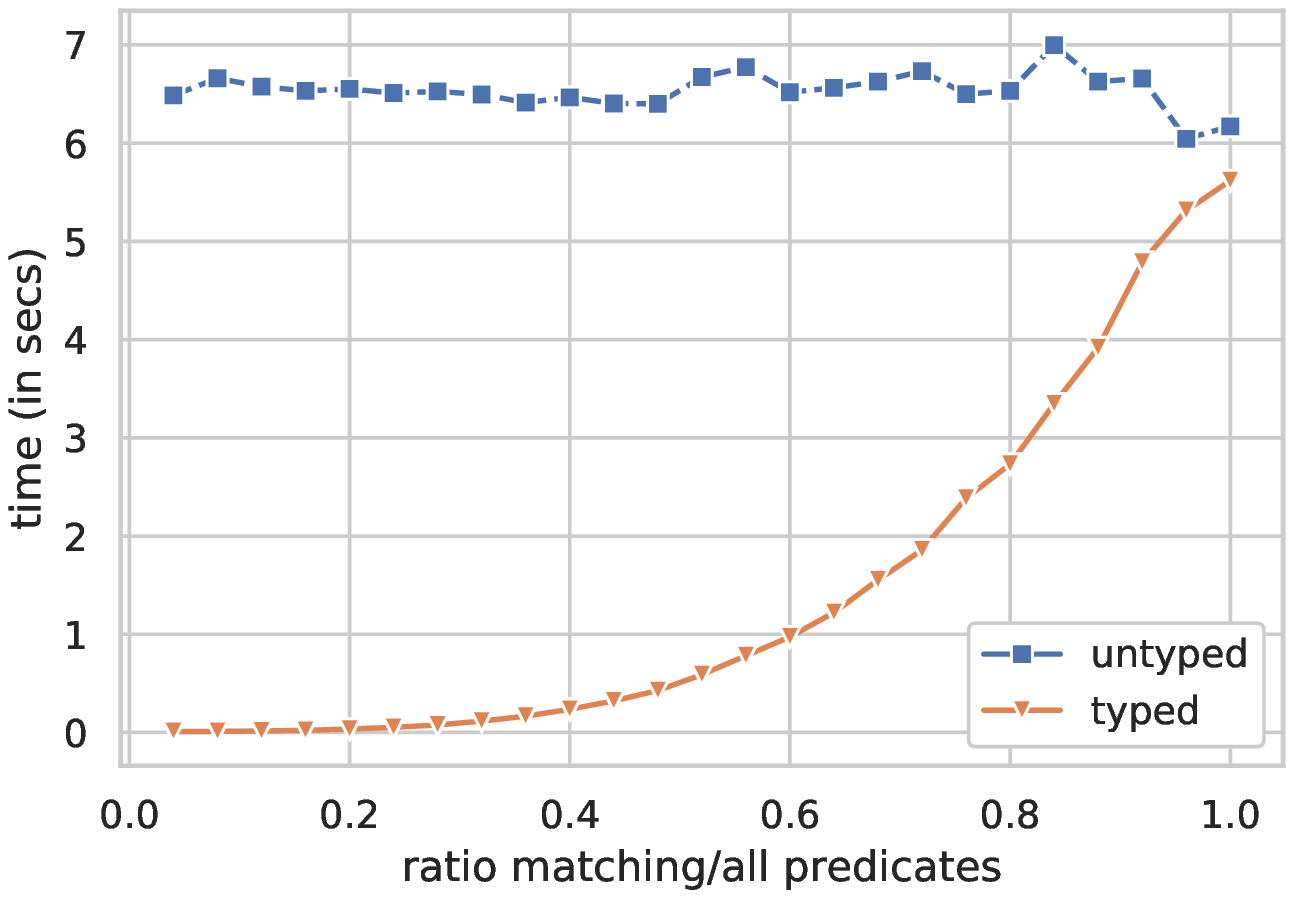}
\end{subfigure}
\caption{Number of proof steps and time for increasing ratio of type matching 
predicates out of all predicates.}
\label{fig:space-ratio}
\end{figure}

\stepcounter{expcount}
\subsection{Experiment \arabic{expcount}: Simply Typed Droplasts}
\label{sec:simple-droplasts}

As a final experiment we take a popular exercise from the literature \citep{kitzelmann2007data},
namely the \texttt{droplasts} program. This polymorphic program takes a list of lists
and drops the last element from each of the inner lists. The program learned is:

\begin{center}
\begin{tabular}{c}
\begin{lstlisting}
droplasts(A,B):-map(A,B,droplasts_1).
droplasts_1(A,B):-reverse(A,C),droplasts_2(C,B).
droplasts_2(A,B):-tail(A,C),reverse(C,B).
\end{lstlisting}
\end{tabular}
\end{center}

\begin{hypothesis}
\label{hypo:simple}
\metast{} is not able to improve on the untyped system, in the time required
and the number of proof steps needed, in a (semi-)realistic setting.
\end{hypothesis}

\paragraph{Setup}
This experiment is conducted stochastically. We generate small
random input examples (outer and inner lists (of integers) of length between 
2 and 5)\footnote{The rather limited lengths are chosen so that we can reuse
most of this experiment when we consider the refinement experiment for the following chapter}
and run them through a reference implementation to obtain correct
positive examples. As we are interested in the effect of type checking on
the search space based on the background predicates, we fix the number of 
(positive) examples generated to three.

We provide the synthesis system with predicates for a list \texttt{concat}
relation (appending elements at the back), 
the \texttt{tail} relation, the \texttt{reverse} relation 
and the two-place identity relation, all with appropriate polymorphic types.
The type of the examples is set to $[list(list(int)),list(list(int))]$.
We give it the \texttt{chain} metarule, along with metarules
for abstraction to the following predicates. The higher-order predicates
made available are the \texttt{map}, the \texttt{reduceback} and the \texttt{filter}
relations.

For the experiment we add additional typed predicates, which for sake of
execution time we take to be simple, forgoing excessive costs associated
with non-determinism and resolution\footnote{The typed system would for the
most part not incur these costs, hence adding an additional cost for resolution
and non-determinism would mostly be a tool to manipulate the results in
your favor.}. The additional predicates only match on particular values for
their arguments, where the values are randomly chosen from a small distribution.
The types of predicates are correct for the values
of the argument and are either: $bool$, $nat$, $int$, $list(int)$, 
$list(list(int))$, or $list(list(X))$.

\begin{figure}[H]
\begin{subfigure}{\textwidth}
\centering
\includegraphics[width=0.99\textwidth]{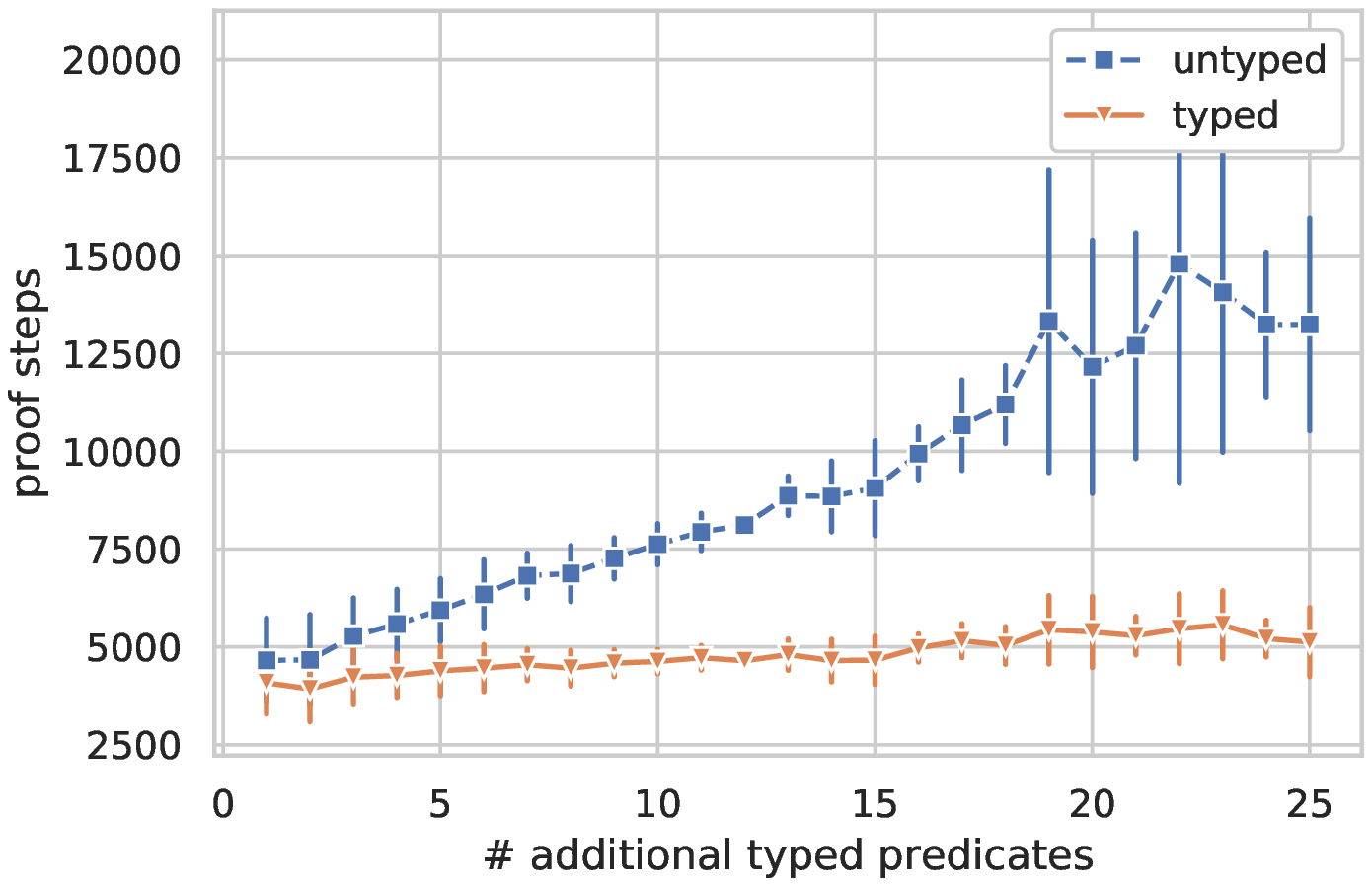}
\end{subfigure}

\begin{subfigure}{\textwidth}
\centering
\includegraphics[width=0.97\textwidth]{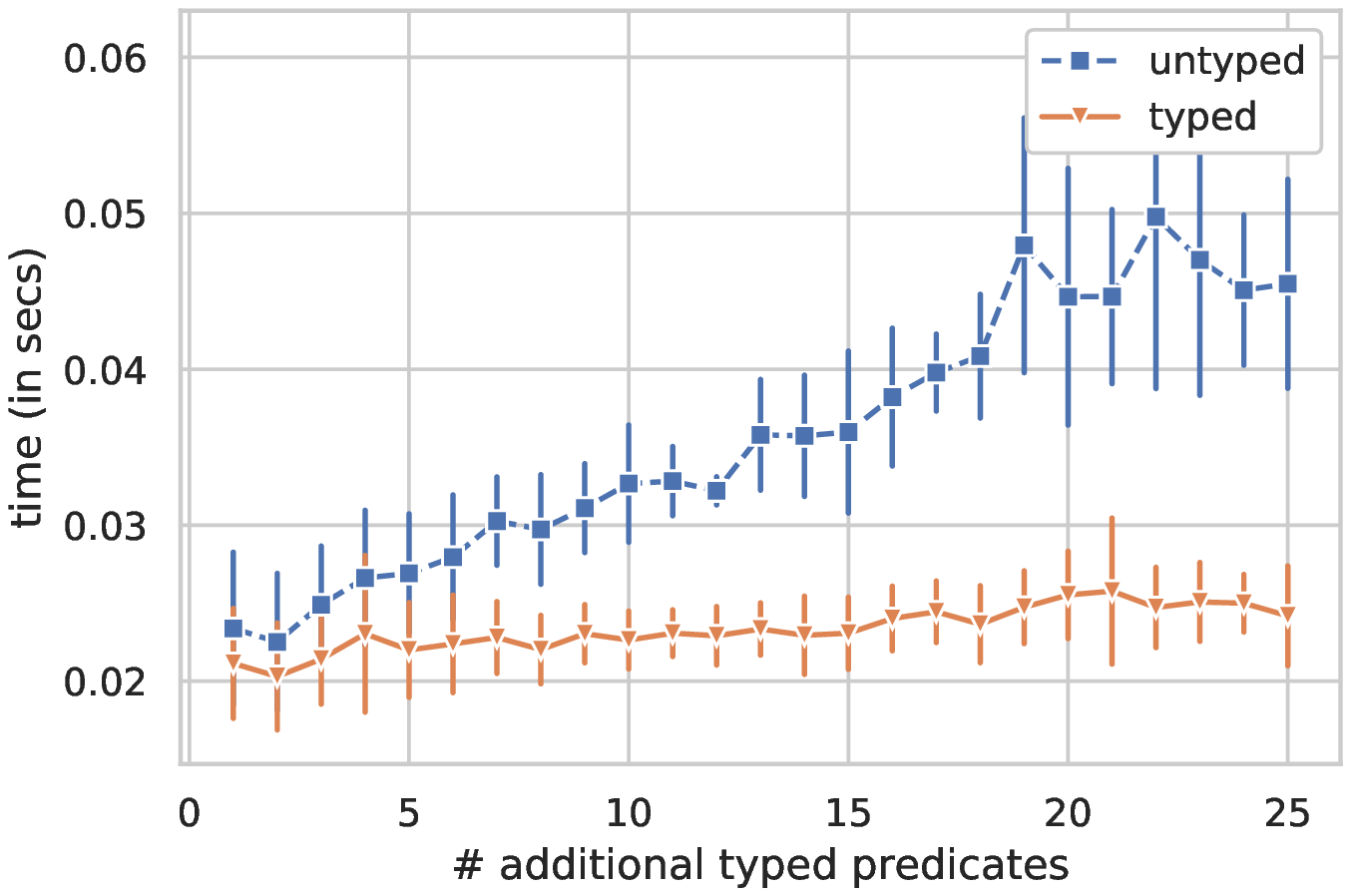}
\end{subfigure}
\caption{Average number of proof steps and time for increasing number of typed background predicates.
Standard deviation is depicted by bars.}
\label{fig:simple-droplast}
\end{figure}

For each number of additional background predicates we run ten trials,
for each trial generating random positive examples and predicates.
We average over the trials and calculate the standard deviation of the sample.

\paragraph{Result}

The plots in figure \ref{fig:simple-droplast} depict the average time and number of
proof steps required by the untyped and typed systems. The standard deviations
are included as bars.

These graphs are reasonable evidence for concluding that the typed system is
able to outperform the untyped system in a non-toy synthesis example,
thereby rejecting hypothesis \ref{hypo:simple}.

\paragraph{Analysis}
The advantage of type checking observed in the previous, deterministic experiments
is less pronounced in this experiment. The main reason for this is that while
in the previous tests the untyped program needed to explore the entire 
space of possible programs, 
i.e.~with each background predicate at each possible position
in all clauses, now the untyped system is able to discard parts of its search
space based on the \emph{values} in the background predicates not unifying.

\chapter{Synthesis with Refinement Types}

Based on the success of polymorphic type checking in making synthesis more efficient,
it is natural to consider whether more expressive types can be leveraged for
further improvement. The extension to polymorphic types we will consider in this
chapter is refinement types, i.e.~polymorphic types with an additional proposition
restricting inhabitants of the type (see section \ref{sec:typed-type-def}
for a definition). We will often use the term \emph{simple types} to refer
to polymorphic types without refinements.

First we discuss how the user specifies refinement types for the
background knowledge.
As the MIL algorithm conceptually only needs minor adjustments, we next 
introduce the adapted algorithm \metart{}.
We show how a refinement proposition representing the entire program 
can be obtained from the proof structure maintained by the algorithm.

Refinement checking is accomplished by proving (un)satisfiability by a 
SMT solver.
We discuss the levels of refinement expressivity available, 
and how there is a tradeoff to be made.
The chapter closes with theory and experimental results.

\section{Representation of Refinements}

The reader is referred to section \ref{sec:typed-type-def}
for the definition and some of the syntax used for refinement types.
We assume familiarity with the problem statement for program synthesis
using polymorphic typed predicates, as well as with the way users provide their 
background knowledge to the \metast{} algorithm, as can be found
in section \ref{sec:simple-problem}.

The user supplied positive and negative examples are unchanged versus
the simply typed case: the user provides a single consistent (non-refinement)
type for all the examples.
As in the case of polymorphic type checking,
we need the user to supply refinement types for the background knowledge. 
The refinement types on the predicates should be such that they are consistent
with the predicates, i.e.~for any predicate 
$p(A_0,\ldots,A_k):[T_0,\ldots,T_k]\<\phi\>$, which gets
its arguments instantiated to $a_0,\ldots,a_k$, it is never the case
that $p(a_0,\ldots,a_k)$ holds while $\phi[a_0/A_0,\ldots,a_k/A_k]$ is
false. Note that the user is not otherwise restricted, they can choose
to use the most general refinement ($true$) or to make it as precise as they
like (even as far as the refinement completely characterizing the predicate).

For specifying refinements in Prolog we require a way to refer to the arguments.
Instead of just annotating the predicate's name with a type we also write out
its formal arguments when asserting that a predicate belongs to the background
knowledge.
The primitive predicate assertions gain an additional argument for the refinement on 
the simple type. The simple type itself is appended to the atom representing
the predicate's name and formal arguments. For example 
\pc{prim(tail(A,B):[list(X),list(X)],$\<length(A) = length(B)+1\>$)}, is
the predicate representing the tail relation with the refinement stating
the length property that always holds of its arguments. Note that the
refinement bracket syntax ($\<\ldots\>$) made its way into our Prolog notation.
For now we hide the actual syntax used in writing down refinements by this 
notation and comeback to the refinement specification language in 
section \ref{sec:ref-expr}.

The interpreted predicates gain an additional assertion for the specification
of the refinement type. We will need to keep track 
of structure of the program derived, 
and hence will keep track of the existentially quantified predicate
names in the interpreted clause.
As an example, the assertion of the map predicate being an interpreted
background predicate becomes:
\begin{center}
\begin{tabular}{c}
\begin{lstlisting}
interpreted([F],map([],[],F):[list(X),list(Y),[X,Y]] :- []).
interpreted([F],map([A|S],[B|T],F):[list(X),list(Y),[X,Y]] :- 
             [F(A,B):[X,Y],map(S,T,F):[list(X),list(Y),[X,Y]]]).
interpreted_ref(map(C,D,F),(*$\<length(C)=length(D)\>$*))
\end{lstlisting}
\end{tabular}
\end{center}

The metarules will not need to keep track of refinements (directly)
and hence remain unchanged versus the simple type problem statement,
i.e.~all atoms are decorated with simple types.
The definition of consistent hypotheses,
and that of MIL learner are essentially unchanged: the programs learned
are still simply typed, and the only adjustment needed is that the supplied
background predicates are refinement typed.

\section{From a Program Derivation to a Refinement}

This section starts out with observing how refinements on predicates
give rise to a refinement over a program.
Next we present how refinement type checking can be integrated
at a high level into the MIL algorithm. This high-level algorithm will
delegate the type checking to a subroutine, hiding the complexity.
We subsequently look how this type checking subroutine is able to 
construct a single proposition that needs to be checked for satisfiability.

\subsection{Backward Action on Refinements}
\label{sec:ref-backward}

We now look at how the validity of substituting
the head of a definite clause with its body gives rise to inferring refinements.
Suppose we are in a position of needing to prove that an atom $Q(X,Y)$ is
an inhabitant of its type $[T_X,T_Y]\<\phi_Q\>$, where $\phi_Q$ might be 
a unknown refinement and $X$ and $Y$ might both be variables 
(and not yet concrete terms) of as of yet undetermined type.

\paragraph{Substitution of refinements}
Suppose that after applying the chain metarule that $Q$ is assigned the 
body $Q(X, Y) \ot R(X,A),S(A,Y)$
with the following typing (again involving unknowns):
\begin{align*}
    R(X,A) &: [T_X,T_A]\< \phi_R \>\\
    S(A,B) &: [T_A,T_Y]\< \phi_S \>\\
\end{align*}

If the refinement of $Q(X,Y)$ was unknown we now know that $\phi_R \wedge \phi_S$
is an accurate substitution for $\phi_Q$ as the just invented definite clause 
would be the only way of deriving $Q(X,Y)$.
To see that this is an accurate substitution observe 
that any occurrence of $Q(X,Y)$ in the program may be replaced by the newly
constructed body. If $\phi_Q$ is not unknown it is the case that there
is already another body assigned to $Q(X,Y)$, which means
we are adding a disjunctive clause. The refinement $\phi_Q$ needs to
be updated to $\phi_Q' = \phi_Q \vee (\phi_R \wedge \phi_S)$.

\paragraph{Backward action}
From the above description it follow that the types in the body of an invention 
have a kind of 
\emph{backward action} with regard to the type of the predicate that is being 
invented. 
This backward action leads to additional named (argument) variables occurring in 
refinement types,
names not present in a predicate head's arguments. Therefore
a context needs to be maintained for any such existentially qualified variables.
The significance of this backward action is that is leads to a \emph{grand 
refinement} for the entire program, i.e.~a single proposition whose satisfiability
determines whether the program is still consistent. For example, it could 
be that $R(X,A)$ has to be invented leading to an adjustment of its refinement type, which
would in turn change $Q(X,Y)$'s type. In the same way $Q(X,Y)$ could occur
in the body of the clause invented for predicate of the examples.
Hence we have that the entire program's refinement type is influenced
by any adjustment to a type due to filling in a definition for a
predicate.

Note that for interpreted predicates we have that the same property holds:
the refinement for the head of the interpreted clause can be made more accurate
by conjuncting it with the refinements of the predicates that occur in the 
clause's body.

\subsection{High-level algorithm: \metart{}}

Figure \ref{fig:metart} contains the code for \metart{}, the refinement type
checking Meta-Interpretive Learning algorithm. The algorithm is, in essence,
the \metast{} algorithm from the previous chapter with additional pruning.
The code in bold notes all the changes made to implement refinement checking. 
The same four disjunctive clauses remain and fulfill the same tasks. 

\begin{figure}
\begin{center}
\begin{tabular}{c}
\begin{lstlisting}
learn(Pos,Neg,Type,Prog):-
  map(decorate_types(Type),Pos,PosTyped),
  map(decorate_types(Type),Neg,NegTyped),
  prove(PosTyped,[],Prog),
  not(prove(NegTyped,Prog,Prog)).
prove([],Prog,Prog).
prove([Atom|Atoms],Prog1,Prog2):-
  prove_aux(Atom,Prog1,Prog3),
  prove(Atoms,Prog3,Prog2).
prove_aux(Atom:DT:GT,Prog,Prog):-
  prim(Atom:DT),!,
  prim(Atom:GT),
  (*\texttt{\textbf{check\_refinement(Prog),}}*)
  call(Atom).
prove_aux(Atom:DT:GT,Prog1,(*\pc{\textbf{Prog3}}*)):-
  interpreted((*\texttt{\textbf{Subs,}}*)Atom:DT:-BodyDT),
  interpreted((*\texttt{\textbf{Subs,}}*)Atom:GT:-BodyGT),
  combine_types(BodyDT,BodyGT,Body),
  (*\pc{\textbf{Prog2=[inter(Atom,DT,GT,Subs)|Prog1]}}*)
  (*\texttt{\textbf{check\_refinement(Prog2),}}*)
  prove(Body,(*\pc{\textbf{Prog2,Prog3}}*)).
prove_aux(Atom:DT:GT,Prog1,(*\pc{\textbf{Prog3}}*)):-
  member(sub(Name,GTinv,Subs),Prog1), 
  check_unifies_with(GTinv,DT),
  GT=GTinv,
  metarule(Name,Subs,(Atom:DT:-BodyDT)),
  metarule(Name,Subs,(Atom:GT:-BodyGT)),
  combine_types(BodyDT,BodyGT,Body),
  (*\pc{\textbf{Prog2=[inv(Name,Atom,DT,GT,Subs)|Prog1],}}*)
  (*\texttt{\textbf{check\_refinement(Prog2),}}*)
  prove(Body,(*\pc{\textbf{Prog2,Prog3}}*)).
prove_aux(Atom:DT:GT,Prog1,(*\pc{\textbf{Prog3}}*)):-
  metarule(Name,Subs,(Atom:DT:-BodyDT)),
  metarule(Name,Subs,(Atom:GT:-BodyGT)),
  combine_types(BodyDT,BodyGT,Body),
  (*\pc{\textbf{Prog2=[sub(Name,Atom,DT,GT,Subs)|Prog1],}}*)
  prove(Body,(*\pc{\textbf{Prog2,Prog3}}*)).
\end{lstlisting}
\end{tabular}
\end{center}
\caption{\metart{}: refinement type checking MIL.}
\label{fig:metart}
\end{figure}

Each disjunct maintains in its \pc{Prog} arguments what the currently derived
program is. For the first disjunct this is done by unifying the selected
predicate name with one of the variables already in \pc{Prog}, hence the change
in the program \pc{Prog} needs no further work.
The other disjuncts explicitly keep track of the structure of the proof constructed
by the algorithm. For invented clauses a meta-substitution is already
kept track of. Uses of higher-order background predicates and of 
invented clauses are now also saved.

The \pc{check\_refinement} predicate achieves pruning by deriving the
proposition representing the constraints imposed on the entire program
by the refinements. The stored proof of the program gives rise to the 
grand refinement, the proposition whose satisfiability determines whether
the program being considered is still a viable option, or else is already
inconsistent. When the grand refinement is satisfiable the call to
\pc{check\_refinement} holds for the supplied program and the search continues.
If the refinement of the supplied program is proven unsatisfiable, 
the proof search procedure of Prolog starts backtracking, discarding 
(at least) the last choice made for the program.

When considering just filling in the existential variables from a clause body
we have that an inconsistent refinement will not become consistent upon 
filling in more variables in the clause. This monotonicity property carries 
over to the grand refinement of the entire program, i.e.~once proving
unsatisfiable subsequent additions to the program cannot make the
grand refinement satisfiable.
This observation is enough to guarantee sound pruning of the search space.

\subsection{Tree-Shaped Grand Refinement}

As noted in the previous section the algorithm maintains the structure of
the program constructed in the form of how the invented and interpreted clauses
are used. This structural information encodes where predicates occur in the 
program and over which variables they operate.

\paragraph{Tree-shape derivation}

The Meta-Interpretive Learning approach to synthesis extends 
Prolog's backward-chaining algorithm with additional methods for proving atoms. 
The backward-chaining algorithm come downs to the idea that a goal 
atom is proven by unifying the goal with the head of a definite clause 
leading to the body atoms of this clause becoming the goals.
Hence a goal atom has child goal atoms, which in turn have child goal atoms, etc.,
until a goal atom is an asserted fact (i.e.~has not body to prove). This means
that the proof of a goal atom forms a tree of goals.

The derivation maintained in the \metart{} algorithm maintains this structure
in \pc{Prog},
but just for the proof steps that involve interpreted and invented clauses.
For usages of interpreted predicates we store \pc{inter(Atom,DT, GT,Subs)},
where \pc{Atom}'s predicate identifies the interpreted clause used.
The \pc{inter(\ldots)} atom encodes the information needed to reconstruct goal 
body atoms, instantiating known predicate variables with the substitution \pc{Subs}.
For invented clauses we have that 
\pc{sub(Name,\ldots)} and \pc{inv(Name,\ldots)} use \pc{Name} to identify the 
metarule used, thereby giving access to the body goals. 
The leafs of this proof tree are the atoms that are either proved by a primitive,
meaning that the chosen primitive will be stored in the \pc{Sub} substitution
of the parent goal, or are atoms who are yet to be proven (whose predicate
symbol might be a variable).
Hence the information stored in \pc{Prog} is sufficient to recreate the proof 
tree with known predicate variables resolved and leaving unknown predicates
as variables.

\paragraph{Constructing the Grand Refinement}
As follows from section \ref{sec:ref-backward}, the grand refinement can be
directly derived from this proof structure. We explain a traversal of the
derivation tree whereby the grand refinement is constructed and at the same
time a \emph{variable context} is maintained. The variable context is the
set of typed variables that occur in the derivation,
along with the values that they are assigned to (in case an argument variable
has already been assigned a value).

\begin{figure}
\begin{center}
\begin{tabular}{c}
\begin{lstlisting}
atom_to_refinement(Pred(Args):Ty,Prog,[],(*$\<true\>$*)):-
  var(Pred).
atom_to_refinement(Pred(Args):Type,Prog,Context,Refinement):-
  prim(Atom:Type,Refinement),
  typed_args_to_context(Args,Type,Context).
atom_to_refinement(Pred(Args):Type,Prog,Context,Refinement):-
  member(inter(Pred(Args),Type,GT,Subs),Prog),
  interpreted(Subs,Pred(Args):Type:-Body),
  interpreted_ref(Pred(_),InterRef),
  body_to_refinement(Body,Prog,BodyContext,BodyRef),
  typed_args_to_context(Args,Type,InterCtx),
  append(BodyContext,InterCtx,Context),
  Refinement=and(BodyRef,InterRef).
atom_to_refinement(Pred(Args):Type,Prog,Context,Refinement):-
  (member(app(Name,Pred(Args),Type,GT,Subs),Prog);
  (member(sub(Name,Pred(Args),Type,GT,Subs),Prog)),
  metarule(Name,Subs,(Pred(Args):Type:-Body)),
  body_to_refinement(Body,Prog,BodyContext,Refinement),
  typed_args_to_context(Args,Type,InterCtx),
  append(BodyContext,InterCtx,Context).

body_to_refinement([],Prog,Context,(*$\<true\>$*)).
body_to_refinement([Atom|Atoms],Prog,Context,Refinement):-
  atom_to_refinement(Atom,Prog,Ctx1,Ref1),
  body_to_refinement(Atoms,Prog,Ctx2,Ref2),
  append(Ctx1,Ctx2,Context),
  Refinement=and(Ref1,Ref2).
\end{lstlisting}
\end{tabular}
\end{center}
\caption{Prolog code for converting a derivation to a Grand Refinement.}
\label{fig:grand-ref}
\end{figure}

The details of the algorithm for converting a derivation to its 
grand refinement are in figure \ref{fig:grand-ref}. The basics are
that two mutually recursive clauses build up the grand refinement bottom up.
Leafs, i.e.~those atoms whose predicate is a variable or a primitive, 
are directly convertible, see the first two disjuncts of 
\pc{atom\_to\_refinement}.
In the case of interpreted predicates the body refinement is first
derived, and this refinement is conjuncted with the refinement that was supplied for the interpreted clause.
For every atom the arguments are added to the context.
In \pc{body\_to\_refinement} a list of atoms is a clause body, which represents
a conjunction of atoms, hence the main task of this clause is to collect
contexts and conjunct refinements.

The grand refinement is derived by supplying \pc{atom\_to\_refinement}
with the example goal that algorithm is currently trying to prove, 
along with the current program derivation. 
As a result we obtain a variable context, containing variable names with
typing (and possibly values), and a single large proposition: the grand refinement. 

\section{Grand Refinement Checking}
\label{sec:ref-checking}

The only issue not dealt with in the previous sections is that of 
establishing whether the grand refinement is satisfiable. Satisfiability of
the refinement is defined as there being an assignment of the variables
in the context such that the grand refinement is true.

This section explores the possibilities that Satisfiability Modulo Theories (SMT)
provide as a framework for specifying refinements. In this framework
the grand refinement corresponds to a SMT logic formula.
SMT solvers will be used to try to 
prove unsatisfiability (also called inconsistency) of this formula.

\subsection{SMT Solvers}

The main motivation for choosing to explore the effectiveness of
refinement types in pruning the search space is that SMT solvers have proven
to be very efficient in solving satisfiability problems.

Satisfiability Modulo Theories (SMT) are theories for stating logic problems 
regarding a set of formulas having a satisfying model. The formulas are expressed in
a suitably restricted logic. The satisfiability problem is encoded in a language
supported by SMT solvers.
A common standard, with widespread support, is the SMTLIB (2.0) language \citep{barrett2010smt}. 
SMT solvers are special purpose programs utilizing the best available
algorithms to prove satisfiability, often relying on heuristics.
Forerunners in performance and support of new logics are 
the Z3 solver \citep{de2008z3}, and the CVC4 solver \citep{barrett2011cvc4}.

\subsection{Language of Refinements}

Up till this point we have relied on mathematical syntax for specifying refinements.
The only requirement we have had on refinements is that they are propositions
that may mention the arguments of a predicate, where we left the availability
of certain functions and notations out of scope.

With our choice for solving satisfiability of the grand refinement fixed,
we can use this decision to guide our specification of refinements.
As any language we choose needs to be translated to a format that the SMT solvers
are able to work with, the simplest solution is to take the syntax of 
SMTLIB.

\paragraph{Variables in refinements}
The main feature on top of SMTLIB that we need is to be able to correctly
keep track of the named variables in the refinements. The chosen solution
is to move from a refinement being a single string to list of strings and
terms. An example for such a refinement is for the map predicate:
\begin{center}
\begin{tabular}{c}
\begin{lstlisting}
interpreted_ref(map(A,B,F):[list(X),list(Y),[X,Y]],
                  ['(= (length ',A,') (length ',B,'))']).
\end{lstlisting}
\end{tabular}
\end{center}

The first thing to notice is the Lisp-like syntax of the refinement, and that
the SMTLIB language has support for functions, e.g.~\pc{length}.
The significance of the interspersing of variable names is that a variable
occurring multiple times in the program derivation (and hence also in the 
grand refinement), are identified with one another.
This also makes it easy to translate such a single derivation variable
to just one SMT variable. The SMT translation of the grand
refinement just has this single variable name in place of all the occurrences
of the variable.

\paragraph{Encoded problem}

The translation of the grand refinement to a SMT problem now proceeds as follows.
First the context variables are considered, which are the same logical
variables that occur in the refinement. For each variable there is 
a declaration of a new SMT variable with a new name, with typing as it occurs 
in the derivation.
If the variable has a known value in the derivation a SMT equality assertion
is generated for the variable with this value.
Note that there may
occur higher-order variables in predicate arguments, 
but these will never occur as variables in refinements and hence are filtered out.

For the grand refinement we have that each single refinement occurring in it
can now be ``folded
down'' from a list of strings and Prolog variables to a single string.
The names chosen for the SMT variables are used as substitutions for the 
variables that occur in the refinements, where upon string concatenation becomes
available.
The structure of the grand refinement itself, involving disjunctions
and conjunctions over refinements, can now be made
into a single string by replacing nodes such as \pc{and(ref1,ref2)} by 
\pc{RefStr} from the following code:

\begin{center}
\begin{tabular}{c}
\begin{lstlisting}
ref_to_SMT(ref1,smt1),ref_to_SMT(ref2,smt2),
        str_concat(["(and ",smt1," ",smt2,")"],RefStr).
\end{lstlisting}
\end{tabular}
\end{center}
The set of formulas given to the SMT solver are thus: the variable declarations,
assertions for values of variables (as far as they are known),
and the translation of the grand refinement type.

\section{Expressiveness of Refinements}
\label{sec:ref-expr}

Having chosen the syntax of the type refinements, this section looks
at the available choices in regards the logic theories that SMT solvers support.
As has been the theme of this document we identified higher-order predicates
as promising, with as argument that many standard list functions/predicates
have simple refinements. We therefore focus on SMT logic theories that are
able to reason over lists.

\subsection{Z3 Sequence Theory}

The first candidate is the Z3 Sequence \citep{SMTseq} theory. This theory
is able to reason over lists of bounded length and includes such function symbols
as: \pc{seq.concat}, \pc{seq.len}, \pc{seq.indexof}, \pc{seq.extract}, etc.
The theory is already undecidable, but it is the smallest theory that we
identified that includes list reasoning. 

In this theory it is very easy to write refinements using just the set of
available function symbols. As our main example, the map predicate's 
refinement assertion looks very familiar:
\begin{center}
\begin{tabular}{c}
\begin{lstlisting}
interpreted_ref(map(A,B,F),
                  ['(= (seq.len ',A,') (seq.len ',B,'))']).
\end{lstlisting}
\end{tabular}
\end{center}

Beyond its undecidability, the main issue with this theory is that it lacks
expressiveness. For example, we might want a refinement on a sort predicate
that states that all the elements of the input and the output lists are the same.
Such a refinement is not possible in this theory.

\subsection{DTLIA: Quantifiers, Datatypes and Arithmetic}

Lists are the prime example of algebraic datatypes (ADTs). In the ADT formulation
of lists there is only the empty list constructor and the \pc{cons} list
constructor for an element and a second list. The only operations available are
to pattern match on these two constructors. Embracing algebraic datatypes
leads more possibilities beyond just lists, e.q.~option types and record types.
The ADT formulation of lists comes with no functions pre-defined, that is,
notions such as length will have to be user-defined. Many important functions
on lists are recursive, e.g.~the length function. When reasoning over lengths
it is often useful to also compare lengths. This means we also need some basic
arithmetic notions. 

Recently there has been work on SMT solvers to support algebraic datatypes, 
the logic fragment of which is called $DT$ \citep{reynolds2015decision}. 
This theory on its own is decidable, though with quantifiers it is not.
In other recent work progress\footnote{The work is so recent that I discovered a unsoundness issue in the implementation. The issue was promptly fixed: https://github.com/CVC4/CVC4/issues/2133} has been made on supporting recursive functions on these datatypes \citep{reynolds2016model}. 
SMT solvers translate recursive functions definitions (on ADTs)
to quantified formula, therefore we have to relinquish decidability.
The need to support simple arithmetic is satisfied by the Linear Integer 
Arithmetic (LIA) theory.
The combination of theories $DT$ (with quantifiers over Uninterpreted Functions)
and $LIA$ is the theory $(UF)DTLIA$. Only the CVC4 solver implements this theory.

The complexity of the recursive function definitions can be hidden from
the user for standard definitions, e.g.~\pc{dt\_length} in the following:
\begin{center}
\begin{tabular}{c}
\begin{lstlisting}
interpreted_ref(map(A,B,F),
                  ['(= ',dt_length(A),' ',dt_length(B),')']).
\end{lstlisting}
\end{tabular}
\end{center}

Behind the scenes the \pc{dt\_length} predicate is able to insert the appropriate
SMTLIB function definition in the set of formulas handed to the SMT solver.
The below code shows such a definition. Note that SMTLIB does not support
the polymorphism of this example (the example uses \pc{Y} as a type parameter), 
but that the generated code must actually generate a separate function definition
based on the types in each instance of the function being in the grand refinement.

\begin{center}
\begin{tabular}{c}
\begin{lstlisting}
(define-fun-rec dt_length ((x (List Y))) Y
 (if-then-else (= x (as nil (List Y)))
      0
      (+ 1 (dt_length (tail x)))
  )
 )
\end{lstlisting}
\end{tabular}
\end{center}

For user-defined recursive SMT functions the effort required to integrate with
the system becomes significant.
The implementation of this system is quite non-trivial while the details are 
rather uninteresting. The document will not look further into the matter.

The $DTLIA$ is our preferred theory in terms of expressiveness. Recursive functions
give enough power to express subset inclusions on lists, hence it becomes
possible to express the property of the \pc{sort} predicate that the elements
are permuted.
It is interesting to note that the recursive functions are so powerful that they
essentially allow us to fully encode (first-order) Prolog predicates in
the SMT logic. Hence users get the option to choose, on a sliding scale, between
very accurate refinements and very coarse refinements.
The understanding is that these more complicated refinements might take more
time to reason over, though could also be useful in detecting inconsistency earlier.




\section{Theoretical Results}

The results from the previous chapter can be directly lifted to the setting
of refinement types. The reason for why the propositions can be directly
restated and their proofs only slightly modified is that the refinement
types have the same soundness properties for the synthesis algorithm as
simple types. In particular the main feature is the inconsistency
pruning of the search space.

\begin{proposition}
The programs found by the \metaai{} algorithm (which can be assigned
a polymorphic types with refinements)
and the \metart{} algorithm are exactly the same.
\end{proposition}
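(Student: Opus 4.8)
The plan is to mirror the completeness argument already carried out for \metast{} and reduce the claim to the general proposition on depth-first search with inconsistency pruning. First I would observe that \metart{} is exactly \metaai{} with two pruning mechanisms layered on top: the polymorphic type checking inherited from \metast{}, and the additional refinement check performed by \pc{check\_refinement}. Since the \metast{} pruning was already shown to be inconsistency pruning relative to \metaai{}, it suffices to show that the refinement pruning is also inconsistency pruning, i.e.~that whenever the grand refinement of a partial program is unsatisfiable, no extension of that program can be a consistent hypothesis in the sense of definition \ref{def:untyped-cons-hypo}.

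This requires two ingredients. The first is that a program whose grand refinement is unsatisfiable cannot entail any positive example. This follows from the standing assumption that the user-supplied refinements are consistent with their predicates: each $\phi$ is a necessary condition for the corresponding atom to hold. By the backward action of section \ref{sec:ref-backward}, the grand refinement is the conjunction (and, across disjunctive clauses, disjunction) of these necessary conditions over the derivation, and is therefore itself a necessary condition for the whole program to derive the example goal. If it is unsatisfiable, the goal cannot be derived, so the program fails at least one positive example and is not a consistent hypothesis (assuming at least one positive example is present). The second ingredient is the monotonicity property already noted in the text: filling in further existential variables, or adding further clauses, only conjoins more constraints to the grand refinement, so an unsatisfiable grand refinement remains unsatisfiable. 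Together these establish that the pruned subtree contains no successful node, which is precisely inconsistency pruning.

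With both pruning mechanisms shown to be inconsistency pruning, \metart{} is an instance of an algorithm $A'$ obtained from $A = \metaai{}$ by adding inconsistency pruning, while the traversal order remains leftmost depth-first and unchanged from \metaai{}. Instantiating the depth-first inconsistency-pruning proposition then gives that the first successful node found by \metaai{} is also the first successful node found by \metart{}. Reading off the program from the associated meta-substitutions yields that the two algorithms return syntactically identical programs on inputs whose intended solution is typable with refinements.

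The main obstacle I anticipate is the first ingredient: rigorously justifying that unsatisfiability of the grand refinement is a sound over-approximation of inconsistency. Concretely, one must verify that the tree-shaped construction of figure \ref{fig:grand-ref}, together with the variable context that identifies shared argument variables, faithfully produces a proposition entailed by the actual derivability of the example goal, so that its unsatisfiability genuinely precludes a successful derivation. This hinges on the correctness of the refinement specifications and on the SMT encoding of section \ref{sec:ref-checking} preserving satisfiability; once these are granted, the remainder is a direct transcription of the \metast{} argument.
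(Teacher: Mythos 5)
Your proposal is correct and follows essentially the same route as the paper's own (much terser) proof: reduce to the depth-first-search-with-inconsistency-pruning proposition of section \ref{sec:simple_theory}, using the monotonicity of the grand refinement and the user-supplied refinements being necessary conditions to establish that the refinement check is sound pruning. You actually supply more detail than the paper, which only sketches this argument and points back to the simply typed case; your flagged obstacle (rigorously justifying the grand-refinement construction and SMT encoding) is likewise left informal in the paper.
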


The proof follows again by that a depth-first search algorithm is used
to traverse the program hypothesis space and that programs are still encountered
in the same order, just that the some inconsistent programs have been skipped
over by the \metart{} algorithm due to type checking. For a more precise argument see section \ref{sec:simple_theory}.

The reduction of the space of $\Htwo$ programs also directly applicable to 
the refinement type system. The refinements are for the most part an improvement with regard to
the granularity of the types, i.e.~they will further restrict the possibilities
for choices of predicates in a clause. This means that the result regarding
the \emph{worst case proportional constant} applies but that refinements in
the background knowledge will shrink this worst case ratio further.

\section{Experimental Results}

We present experiments which check whether there is sufficient evidence to 
claim that the current implementation of
refinement type checking has significant advantages, i.e.~whether we are justified
in \emph{rejecting} the following:

\begin{hypothesis}
\label{hypo:ref-impr}
Refinement type checking cannot reduce the number of proof steps 
compared to non-refinement polymorphic type checking.
\end{hypothesis}

As in the previous chapter we focus on the \texttt{droplasts} program. This
program takes a list of lists and drops the last element from each of the inner lists.
We perform statistical experiments to evaluate whether refinement type checking has
any benefits versus only simple types checking (and by extension versus the
untyped system).

In the first experiment we add predicates with polymorphic types that compose well,
but whose refinement types lead to being able to decide that some combinations
of predicates in a clause will not work. This test is primarily to show a difference
between simple type checking and refinement type checking.
The second experiment tries to be a bit more general and includes predicates
that are less sensible, though allow for a bigger difference between
the untyped case and the typed cases.

\stepcounter{expcount}
\subsection{Experiment \arabic{expcount}: Droplasts with Sensible Predicates}

The program we are synthesizing is \texttt{droplasts}. We perform a statistical
experiment where we have random background knowledge 
containing predicates with refinement types and are iteratively 
adding additional refinement typed predicates to 
check how the refinement type checking system behaves when the composition of
refinements rules out certain programs.

\paragraph{Setup}

We follow the setup of the simply typed droplasts experiment of section \ref{sec:simple-droplasts}.
We generate small random input examples, outer and inner lists (of integers) of length between 
2 and 5, and run them through a reference implementation to obtain correct
positive examples.
We choose this length restriction for the  examples, because the
SMT solver needs more time for larger lists. 
Adding additional examples would make the found programs more accurate,
but would come at the cost of even longer synthesis times (due to the SMT solver
being invoked more often).
As we are interested in the effect of type checking on
the search space we fix the number of (positive) examples generated to three.

We provide the synthesis system with the following predicates:
\begin{itemize}
    \item \texttt{concat:[A:list(T),B:T,C:list(T)]$\<length(A) + 1 = length(C)\>$}
        \subitem The relation that appends an element at the back, where
        \texttt{length} is the length function over ADT lists definable in
    SMT-LIB.
    \item \texttt{tail:[A:list(X),B:list(X)]$\<A=cons(\_,B)\>$}
        \subitem The relation taking of the head of a list, with accurate
        refinement representation.
    \item \texttt{reverse:[A:list(X),B:list(X)]$\<rev(A,B)\>$}
        \subitem The reverse relation on lists. The refinement is a SMT-LIB
        quadratic time predicate function checking whether the arguments are each
        others' reversal.
    \item \texttt{map:[A:list(X),B:list(Y),F:[X,Y]]$\<length(A)=length(B)\>$}
        \subitem The higher-order map relation with a refinement concerning
        lengths.
    \item \texttt{reduceback:[list(T),list(T),[list(T),T,list(T)]]$\<true\>$}
        \subitem The reduceback relation which essentially replaces \texttt{cons}
        in a list with the function parameter. The current refinement system
        cannot assign a more useful refinement.
    \item \texttt{filter:[A:list(T),B:list(T),F:[T]]$\<length(B) \leq length(A)\>$}
        \subitem The filter relation whereby element can only be retained
        from the first list if the \texttt{F} predicate holds for the element.
\end{itemize}

As part of the experiment we add random background predicates. These
predicates drop an element at a fixed index from a list. The refinement for
these predicates encode that lists of lower length are unaffected and lists
of length equal to this index, or higher, have their lengths reduced by one.

The SMT solver's timeout has been set to $30$ milliseconds, which still gives
enough time for the solver to prove some problem instances \texttt{unsat}.
We use the CVC4 SMT solver, for which we specify the logic DTLIA,
with options for finite model finding and inductive reasoning enabled.

For each number of additional background predicates we run ten trials,
for each trial generating positive examples and random predicates.
We average over the trials and calculate the standard deviation of the sample.

\paragraph{Result}

The plots in figure \ref{fig:ref-droplast} depict the average time and number of
proof steps required by the untyped, simply typed and refinement type systems.
The standard deviations are included as bars.

The amount of time necessary for refinement checking is an issue made obvious
by the first graph. The proportion of type spent in Prolog running the Metagol
code is trivial compared to the time spent waiting for the SMT solver. We are
aware that this means that the current implementation does not afford a lot of
practical benefit, instead we should approach the work on refinement types 
as a proof of concept.

For the plot on the right we have that the refinement system is able to cut down on
the search space explored relative to the untyped and simply typed systems,
though with considerable variance\footnote{The sudden increase of variance is
    a sign that this experiment is not particularly well designed. In future
work other long running experiments will be used to evaluate the work.}.
Note that because the types for the most part compose that the difference between
the simply typed and untyped systems is small, again showing that worst case proportional
constant is significant in improving on the untyped system.
Given these graphs we are inclined to 
reject the hypothesis that refinement type checking does not have any advantages,
though we do so with the knowledge that the amount of data generated is actually 
rather limited and a sudden spike in variance. Another provision is that improvement in search space reduction
comes at a very severe cost in regard to execution time.

As a sanity check of the implementations we use the soundness results from this, and the previous, chapter. By soundness we know that when a system prunes part of the search space
they do so only when that part of the search space cannot yield a successful program.
Hence the number of proof steps for the \metast{} should always be at most the number
of proof steps of the \metaai{} system, and the \metart{} system should always
need at most the number of proof steps of the \metast{} system.
We can confirm this for each separate trial that was run, giving some confidence
in the correctness of the implementation.

\begin{figure}[H]
\begin{subfigure}{\textwidth}
\centering
\includegraphics[width=0.99\textwidth]{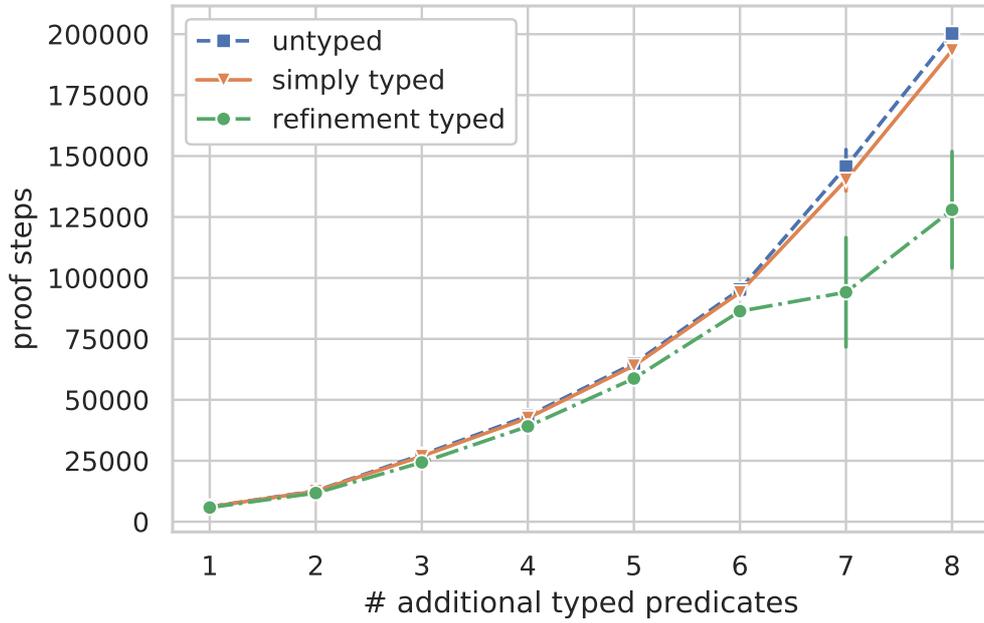}
\end{subfigure}

\hspace{1.5ex}
\begin{subfigure}{\textwidth}
\centering
\includegraphics[width=0.95\textwidth]{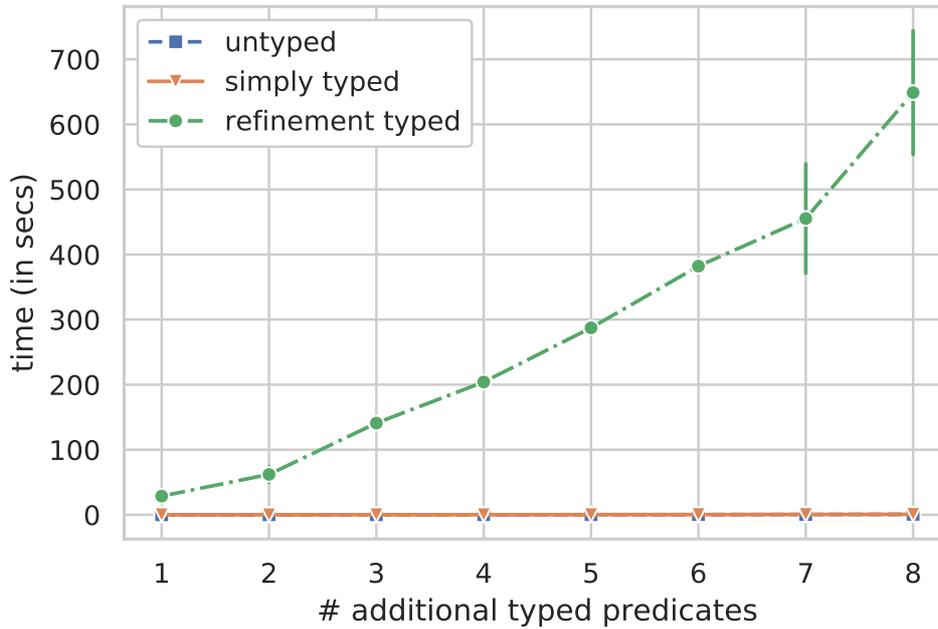}
\end{subfigure}
\caption{Average number of proof steps and time for increasing number of typed background predicates.
Standard error is depicted by bars.}
\label{fig:ref-droplast}
\end{figure}

\stepcounter{expcount}
\subsection{Experiment \arabic{expcount}: Droplasts with Additional Predicates}

In order to distinguish more between the untyped and the simply typed cases
in the previous experiment an (even) longer running experiment was conducted.

\paragraph{Setup}
We take an identical setup to the previous experiment, though now remove
the identity predicate from the base background knowledge and add the 
following refinement typed predicates:

\begin{lstlisting}
dumb0([0],[]):[list(nat),list(X)](*$\<true\>$*).
dumb1(W,0):[list(nat),int](*$\<false\>$*):-findall(K,(between(3,4,K)),W).
dumb2(W,0):[list(int),int](*$\<false\>$*):-findall(K,(between(2,7,K)),W).
\end{lstlisting}

The first and the second predicate should usually be ruled out due to polymorphic
types and the second and third have refinements that state that they cannot 
be used in a program.

\paragraph{Result}

The plots in figure \ref{fig:ref-droplast-more} depict the average time and number of
proof steps required by the untyped, simply typed and refinement type systems.
The standard deviations are included as bars.
\begin{figure}[h]
\begin{subfigure}{\textwidth}
\centering
\includegraphics[width=0.99\textwidth]{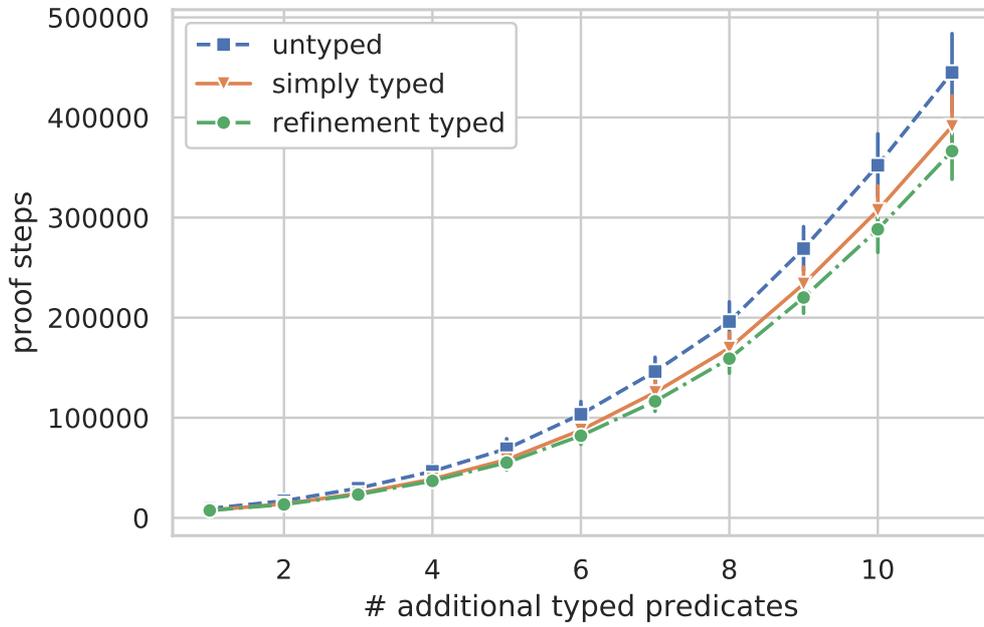}
\end{subfigure}

\hspace{1.5ex}
\begin{subfigure}{\textwidth}
\centering
\includegraphics[width=0.97\textwidth]{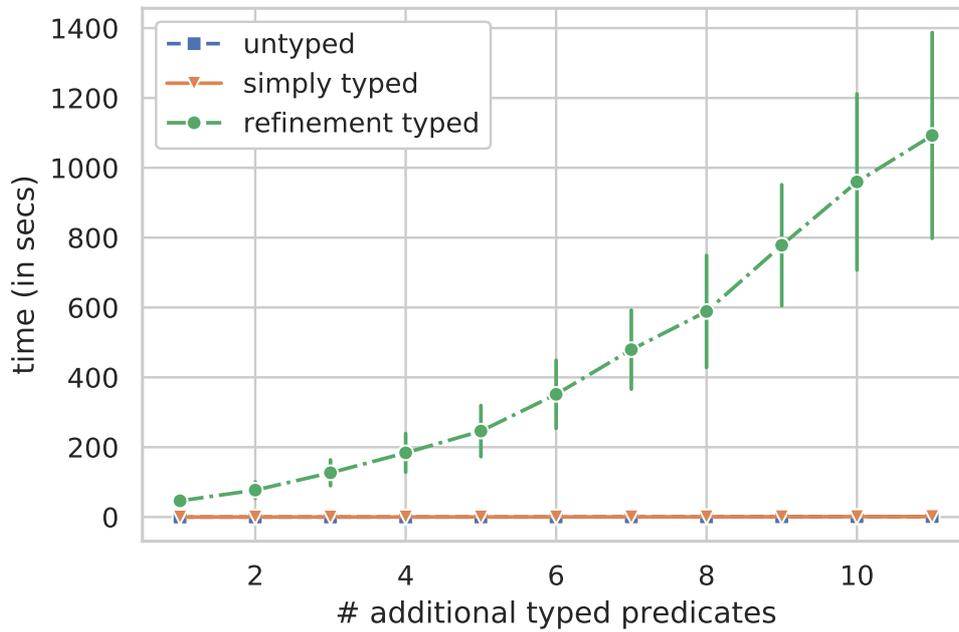}
\end{subfigure}
\caption{Average number of proof steps and time for increasing number of typed background predicates.
Standard error is depicted by bars.}
\label{fig:ref-droplast-more}
\end{figure}

As is unsurprising, the time results show that our refinement checking is
very slow, and has very high variance as well. The plot of the number of
proofs steps to the right shows that while there is evidence for concluding
that the refinement type checking system can improve in regard to the 
untyped system, the simply typed and refinement typed systems are quite close
in the size of the search space traversed. This experiment is
not able to show significant improvements for the refinement type checking.
A better considered experiment is needed, which we have to defer to future
work.

The data does show that the refinement type checking system is able to 
do with strictly fewer proof steps than the (simple) polymorphic typed
system and the untyped system.
We can cautiously use this as evidence
that the refinement typed system is able to improve on the untyped (and simply
typed) system, though we have to keep in mind that this is solely for a
minor reduction in the size of the search space and that it comes at a very
large expense in terms of runtime.

\chapter{Conclusion}

Logic program synthesis by Meta-Interpretive Learning (MIL) benefits significantly, 
in terms of search space explored and time needed, from type checking.
The difference in performance between the untyped \metaai{} algorithm 
and the same algorithm with simple type checking, \metast{}, is impressive.
On the other hand, the value of the refinement types checking introduced in 
this document is foremost theoretical: 
there is only limited experimental improvement,
but the work does represent an advancement in terms of bringing refinement
typing to Inductive Logic Programming (ILP).

\subsubsection{Summary and evaluation}
In building on the MIL framework, the basic synthesis algorithm has shown
why it is such a convincing approach to ILP: it is a simple, succinct, and 
highly adaptable extension of the algorithm used for proving atoms in logic 
programming.
The \metaai{} variant of the framework was essential in allowing the system
to express higher-order programs. It is this work on higher-order programs 
that made it possible to consider interesting refinement type properties. 
The major deficiency we identified with the system was its naiveté in not
taking typing into account.

The addition of polymorphic types to the MIL algorithm is a very good fit.
The syntax for  types has been chosen such that it is easy to leverage
the powerful unification of Prolog, which is entirely responsible for type 
checking. In experimental work we have shown that the introduction of simple
type checking is very effective, showing significant improvements in both
shrinking the search space explored and the time required for synthesis.
There is (up to) a cubic reduction in the size of the search space and 
synthesis time, in terms of the number of typed background predicates.
As annotating background knowledge with polymorphic types is only
a small burden for users, this result provides a strong argument
for taking polymorphic type checking into consideration for future ILP
systems.

We presented the theory that introduces refinement types to ILP.
Refinements on types allow for more accurate type checking. The cost of
checking the refinements is however not insignificant. We leverage SMT solving,
which in the current approach incurs such overhead that timewise the refinement
checking has a severe detrimental effect.
The overhead is attributable to the logics considered not being as performant
as hoped, and to the shear number of invocation of the solver.
The experimental work shows that some additional pruning of the search space is 
achieved, though more work is needed to make refinement type checking in ILP
worthwhile.

\subsubsection{Future work}

\paragraph{Theory}
The experimental work on polymorphic types (section \ref{sec:simple-expr}) showed
(and partially explained) a discrepancy between the theory of the size of
hypothesis spaces and the search space over programs considered by MIL algorithms.
A better characterization of the search spaces considered by MIL would be useful
in predicting the impact of algorithmic changes. 

The work regarding
polymorphic type checking itself can be made more convincing by presenting
formal proofs of soundness and completeness. In the future a formal type
system should be introduced for this purpose.

\paragraph{Justifying refinements}
While the polymorphic type checking approach is quite satisfactory, the refinement 
types work can be improved upon in a number of areas.
Following on from the experimental work, better experiments are needed
to justify refinement type checking as a worthwhile approach to further
pruning the search space.

In addition, as the main aim of refinement
type checking is in improving performance, the most direct way of addressing
this issue is by making use of more performant theories for the SMT solvers.
The work is mainly in identifying logics that are expressive enough to state useful
properties, while requiring much less time to prove (un)satisfiability.
Analysis of the number of invocations of the SMT solvers in the experimental data 
of section \ref{sec:ref-expr} could be used to express the performance needed 
of SMT solvers for the current approach, which involves many SMT invocations,
to be a sensible way forward. 

\paragraph{Reduction to SMT}
The current approach asks the SMT solver to solve an entirely new problem
every time it is invoked, but when a previous refinement check was
satisfiable it is usually the case that only additional assertions need
to be added. This structural property could be leveraged to make solvers
more efficient by allowing them to reuse work performed for the previous checks.
A more ambitious approach (briefly considered for this project) is to completely
encode the ILP synthesis problem as a set of constraints for a SMT solver.
The prospect of integrating the checking of refinements into such a single
SMT problem is especially enticing.

\paragraph{Directing the search}
In section \ref{sec:typed-direct-search} we already explored the possibility of
types being able to guide the traversal of the search space, which given a 
well-considered heuristic would be able to further prune the search space.
Such further pruning might also have implications for the viability of the
current refinement type checking approach as this could significantly
reduce the number of invocations of the SMT solver.

\paragraph{Functional metarules}
The MIL approach to synthesis is especially powerful in that it is able to
invent program clauses. The main reason for this capability are the metarules.
It appears that introducing rules for structural inventions might
be applicable outside of ILP. Introduction of metarules to the setting
of functional programs would a major new avenue to explore.
Existing type system-based approaches could be extended, thereby
introducing the (almost) unique feature of invention of helper clauses/functions
to the field of functional program synthesis.

\bibliographystyle{plainnat}

\bibliography{../library,./bibliography}


\end{document}